\renewcommand{\thefootnote}{\fnsymbol{footnote}}
\newcommand{\EP}[1]{EigenPro\,#1}
\newcommand{\FALKON}{\textsf{\textsc{\small Falkon}}}
\newcommand{\GPytroch}{\textsf{\textsc{\small Gpytorch}}}
\newcommand{\NYTRO}{\textsf{\textsc{\small Nytro}}}
\renewcommand{\Hilbert}{{\bm{\mathcal{H}}}}
\newcommand{\Xspan}{\bm{\mathcal{X}}}
\newcommand{\Zspan}{\bm{\mathcal{Z}}}
\def\cite{\citet}
\begin{document}
\icmltitlerunning{Toward large kernel models}

\twocolumn[
\icmltitle{Toward Large Kernel Models}

\begin{icmlauthorlist}
\icmlauthor{Amirhesam Abedsoltan}{cse}
\icmlauthor{Mikhail Belkin}{hdsi,cse}
\icmlauthor{Parthe Pandit}{hdsi}
\end{icmlauthorlist}

\icmlaffiliation{cse}{Department of Computer Science and Engineering, and}
\icmlaffiliation{hdsi}{Halicioglu Data Science Institute, UC San Diego, USA}

\icmlcorrespondingauthor{}{aabedsoltan, parthepandit@ucsd.edu}

\icmlkeywords{Kernel machines}

\vskip 0.3in
]
\printAffiliationsAndNotice{}

\newcommand\blfootnote[1]{%
  \begingroup
  \renewcommand\thefootnote{}\footnote{#1}%
  \addtocounter{footnote}{-1}%
  \endgroup
}

\begin{abstract}

Recent studies indicate that kernel machines can often perform similarly  or better than deep neural networks (DNNs) on small datasets. The interest in kernel machines has been additionally bolstered by the discovery of their equivalence to wide neural networks in certain regimes. 
However, a key feature of DNNs is their ability to scale the model size and training data size independently, whereas in traditional kernel machines model size is tied to data size. Because of this coupling, scaling kernel machines to large data has been computationally challenging. 
In this paper, we provide a way forward  for constructing large-scale \textit{general kernel models}, which are a generalization of kernel machines that decouples the model and data, allowing training on large datasets. 
Specifically, we introduce EigenPro 3.0, an algorithm based on  projected dual preconditioned SGD and show scaling to model and data sizes which have not been possible with existing kernel methods. We provide a PyTorch based implementation which can take advantage of multiple GPUs.
\end{abstract}

\vspace{-0.5cm}
\section{Introduction}

Deep neural networks (DNNs) have become the gold standard for many large-scale machine learning tasks. Two key factors that contribute to the success of DNNs are the large model sizes and the large number of training samples. Quoting from~\citep{kaplan2020scaling} {``\it performance depends most strongly on scale, which consists of three factors: the number of model parameters $N$ (excluding embeddings), the size of the dataset $D$, and the amount of compute $C$ used for training. Within reasonable limits,
performance depends very weakly on other architectural hyperparameters such as depth vs. width}''.
Major community  effort and great amount of resources have been invested in scaling models and data size, as well as in understanding the relationship between the number of model parameters, compute, data size, and  performance. 
Many current architectures have hundreds of billions of parameters and are trained on large datasets with nearly a trillion data points (e.g., Table~1 in~\citep{hoffmann2022training}). Scaling both model size and the number of training samples are seen as crucial for optimal performance.

Recently, there has been a surge in research on the equivalence of special cases of DNNs and kernel machines. For instance, the Neural Tangent Kernel (NTK) has been used to understand the behavior of fully-connected DNNs in the infinite width limit by using a fixed kernel \citep{jacot2018neural}. A rather general version of that phenomenon was shown in \citep{zhu2022transition}. Similarly, the Convolutional Neural Tangent Kernel (CNTK) \citep{li2019enhanced} is the NTK for convolutional neural networks, and has been shown to achieve accuracy comparable to AlexNet \citep{alexnet2012} on the CIFAR10 dataset.\blfootnote{A Python package is available at \href{https://github.com/EigenPro/EigenPro3}{github.com/EigenPro3}}

These developments have sparked interest in the potential of kernel machines as an alternative for DNNs. Kernel machines are relatively well-understood theoretically, are stable, somewhat interpretable, and have been shown to perform similarly to DNNs on small datasets \citep{arora2020harnessing,lee2020finite, radhakrishnan2022simple}, particularly on tabular data~\citep{HGammaKernel, radhakrishnan2022feature}. However, in order for kernels to be a viable alternative to DNNs, it is necessary to develop methods to scale kernel machines to large datasets.

\textbf{The problem of scaling.} 
Similarly to DNN, to achieve optimal performance of kernel models, it is not sufficient to just increase the size of the training set for a fixed model size, but the model size must scale as well. Fig.~\ref{small_model} illustrates this property on a small-scale example (see Appendix \ref{sec:fig1_app} for the details). The figure demonstrates that the best performance cannot be achieved solely by increasing the dataset size. Once the model reaches its capacity, adding more data leads to marginal, if any, performance improvements. On the other hand, we see that the saturation point for each model is not achieved until the number of samples significantly exceeds the model size.  This illustration highlights the need for algorithms that can independently scale dataset size and model size for optimal performance.

\begin{figure}[t]
  \centering
  \vspace{-10pt} 
  \includegraphics[width=1.0\columnwidth]{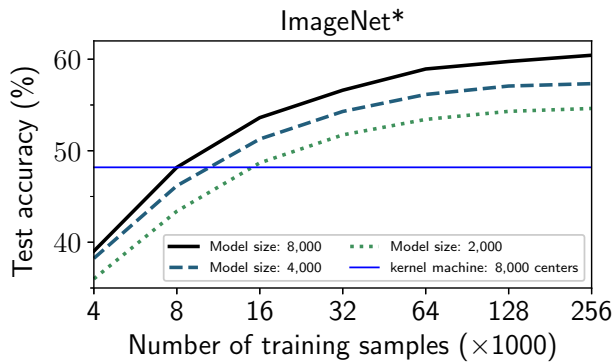}
 \vspace{-0.8cm} 
  \caption{Increasing number of training samples for a fixed model size is helpful but insufficient for optimal performance. The model size must scale as well. See Appendix \ref{sec:fig1_app} for details.} 
  \vspace{-0.6cm}\label{small_model}
\end{figure}

\subsection{Main contributions} 

We introduce \EP{3.0} for learning general kernel models that can handle  \textbf{large model sizes}.
In our numerical experiments, we train models with up to $1$ million centers on $5$ million samples. To the best of our knowledge, this was not achievable with any other existing method. Our work provides a path forward to scaling both the model size and size of the training dataset, independently. 

\subsection{Prior work} \label{subsec:priorworks}
A naive approach for training kernel machines is to directly solve the equivalent kernel matrix inversion problem. In general, the computational complexity of solving the kernel matrix inversion problem is $O(n^3)$, where $n$ is the number of training samples. Thus, computational cost grows rapidly with the size of the dataset, making it computationally intractable for datasets with more than $\sim10^5$ data points.

A number of methods have been proposed to tackle this issue by utilizing various iterative algorithms and approximations. We provide a brief overview of the key ideas below.

\textbf{Gradient Descent(GD) based algorithms:} Gradient descent (GD) based methods, such as Pegasos~\citep{shalev2007pegasos}, have a more manageable computational complexity of $O(n^2)$ and can be used in a stochastic setting, allowing for more efficient implementation. Preconditioned stochastic gradient descent based method, EigenPro~\citep{ma2017diving} was introduced to accelerate convergence of Pegasos. EigenPro is an iterative algorithm for kernel machines that uses a preconditioned Richardson iteration \citep{richardson1911ix}. Its performance was further improved in EigenPro 2.0~\citep{ma2019kernel} by reducing the computational and memory costs for the preconditioner through the use of a Nyström extension~\citep{williams2000using} and introducing full hardware (GPU) utilization by adaptively auto-tuning the learning rate.

However, Kernel machines are limited in scalability due to the coupling between the model and the training set. With modern hardware, these methods can handle just above one million training data points. A more in-depth discussion about EigenPro can be found in Section~\ref{subsec:Background on EigenPro}.


\textbf{Large scale Nystr\"om-approximate models:} Nystr\"om methods have been a popular  strategy for applying kernel machines at scale starting with~\citep{williams2000using}. We refer the reader to \Cref{sec:prelim} for a detailed description of the Nystr\"om approximation.
Methods such as {\NYTRO}~\citep{camoriano2016nytro} and {\FALKON}~\citep{rudi2017falkon} use Nystr\"om approximation (NA) in combination with other techniques to improve performance. Specifically, {\NYTRO} combines NA with gradient descent to improve the condition number, while {\FALKON} uses NA in combination with the Conjugate Gradient method. While these methods allow for very large training sets, they are limited in terms of the model size due to memory limitations.  For example, the largest models considered~\citep{meanti2020kernel} have about $100,\!000$ centers. With $340$GB RAM available to us on the Expanse cluster~\citep{XSEDE}, we were able to run \FALKON~ with at $256,\!000$ model size (Figure~\ref{fig:falkonvsep2vsep3}). However, scaling to $512,\!000$ model size, already requires over $1$TB RAM which goes beyond specifications of most current  high end servers. 

\textbf{Sparse Gaussian Process:} Methods from the literature on Gaussian Processes, e.g. \citep{titsias2009variational}, use  so-called \textit{inducing points} to control the model complexity. While several follow-ups such as \citep{wilson2015kernel} and \GPytroch~ \citep{gardner2018product}, and \textsf{\small GPFlow}~\citep{GPflow2017} have been applied in practice, they require quadratic memory in terms of the number of the inducing points, thus preventing scaling to large models. 
A closely related concept is that of \textit{kernel herding} \citep{chen2012super}. The focus of these methods is largely on selecting ``good'' model centers, rather than scaling-up the training procedure. Our work is complementary to that line of research.

\textbf{Random Fourier Features (RFFs):} 
RFF is a popular method first introduced in \citep{rahimi2007random} to approximate kernel machines using so-called ``Random Fourier Features''. However, it is generally believed  that Nystr\"om methods outperform RFF ~\citep{yang2012nystrom}.

\section{Preliminaries and Background}
\label{sec:prelim}

\textbf{Kernel Machines:} Kernel machines, \citep{scholkopf2002learning} are non-parametric predictive models. Given training data $(X,\y)=\curly{\x_i\in \Real^d,y_i\in \Real}_{i=1}^n$  a kernel machine is a model of the form
\begin{align}\label{eq:representer_thm}
    f(\x) = \sum_{i=1}^n\alpha_i K(\x,\x_i).
\end{align}
Here, $K:\mathbb{R}^d\times \mathbb{R}^d\rightarrow\mathbb{R}$ is a positive semi-definite symmetric kernel function \citep{aronszajn1950theory}. According to the representer theorem \citep{kimeldorf1970correspondence}, the unique solution to the infinite-dimensional optimization problem
\begin{align}\label{eq:kernel_regression}
    &\arg\min{f\in\Hilbert}\, \sum_{i=1}^n (f(\x_i)-y_i)^2+\lambda\norm{f}_\Hilbert^2
\end{align}
has the form given in Eq.~\ref{eq:representer_thm}.

Here $\Hilbert$ is the (unique) reproducing kernel Hilbert space (RKHS) corresponding to $K$.

It can be seen that $\alphavec = (\alpha_1,\ldots,\alpha_n)$ in \cref{eq:representer_thm} is the unique solution to the linear system,
\begin{align}\label{eq:linear_system}
(K(X,X)+\lambda I_n)\alphavec=\y.
\end{align}

\textbf{General kernel models} are models of the form,
\begin{align*}
f(\x) = \sum_{i=1}^p\alpha_i K(\x,\z_i),
\end{align*}
where $Z=\{z_i\in\mathbb{R}^d\}_{i=1}^p$ is the set of \textit{centers}, which is not necessary the same as the training set. We will refer to $p$ as the \textit{model size}. 
Note that kernel machines are a special type of general kernel models with $Z = X$. Our goal will be to solve \cref{eq:kernel_regression} with this new constraint on $f$.

Note that when $Z\subset X$, is a random subset of the training data, the resulting model is called a Nystr\"om approximate (NA) model \citep{williams2000using}.

In contrast to kernel machines, general kernel models, such as classical RBF networks~\citep{poggio1990networks},  allow for the separation of the model and training set.  
General kernel models also provide explicit control over model capacity by allowing the user to choose $p$ separately from the training data size. This makes them a valuable tool for many applications, particularly when dealing with large datasets.

\textbf{Notation:} In what follows, functions are lowercase letters $a$, sets are uppercase letters $A$, vectors are lowercase bold letters $\a$, matrices are uppercase bold letters $\A$, operators are calligraphic letters $\mathcal{A},$ spaces and sub-spaces are boldface calligraphic letters $\bm{\mathcal{A}}.$

\textbf{Evaluations and kernel matrices:} The vector of evaluations of a function $f$ over a set $X=\curly{\x_i}_{i=1}^n$ is denoted $f(X):=(f(\x_i))\in\Real^n$. For sets $X$ and $Z$, with $|X|=n$ and $|Z|=p$, we denote the kernel matrix $K(X,Z)\in\Real^{n\times p},$ while $K(Z,X)=K(X,Z)\T$.
Similarly, $K(\cdot,X)\in\Hilbert^{n}$ is a vector of functions, and we use $K(\cdot,X)\alphavec:=\sum_{i=1}^n K(\cdot,\x_i)\alpha_i\in\Hilbert,$ to denote their linear combination.
Finally, for an operator $\mc A,$ a function $a$, and a set $A=\{\a_i\}_{i=1}^k$, we denote the vector of evaluations of the output, 
\begin{align}
 \mc A\curly{a}(A)   := (b(\a_i))\in\Real^k\qquad \text{where}\quad b=\mc A\round{a}.
\end{align}

\begin{definition}[Top-$q$ eigensystem]\label{def:topq_eignesystem} Let $\lambda_1>\lambda_2>\ldots>\lambda_n,$ be the eigenvalues of a hermitian matrix $\A\in\Real^{n\times n}$, \ie, for unit-norm $\e_i$, we have $\A\e_i=\lambda_i\e_i$. Then we call the tuple $(\Lambda_q,\E_q,\lambda_{q+1})$ the top-$q$ eigensystem, where \begin{align}
    &\Lambda_q:={\rm diag}(\lambda_1,\lambda_2,\ldots,\lambda_q)\in\Real^{q\times q},\text{ and}\\
    &\E_q:=[\e_1,\e_2,\ldots,\e_q]\in\Real^{n\times q}.
\end{align}
\end{definition}

\begin{definition}[Fr\'echet derivative] Given a function $J:\Hilbert\to \Real$,
the Fr\'echet derivative of $J$ with respect to $f$ is a linear functional, denoted $\nabla_f J$, such that for $h\in\Hilbert$
\begin{align}
    \lim_{\norm{h}_\Hilbert\rightarrow 0}\frac{\abs{J(f+h)-J(f)-\nabla_f J(h)}}{\norm{h}_\Hilbert}=0.
\end{align}
\end{definition}
Since $\nabla_f J$ is a linear functional, it lies in the dual space $\Hilbert^*.$ Since $\Hilbert$ is a Hilbert space, it is self-dual, whereby $\Hilbert^*=\Hilbert.$
If $f$ is a general kernel model, and $L$ is the square loss for a given dataset $(X,\y)$, \ie, $L(f):=\frac12\sum_{i=1}^n(f(\x_i)-y_i)^2$ we can apply the chain rule, and using reproducing property of $\Hilbert$, and the fact that $\nabla_f\inner{f,g}_\Hilbert=g$, we get, that the Fr\'echet derivative of $L$, at $f=f_0$ is, 
\begin{align}\label{eq:frechet_square_loss}
    \nabla_f L(f_0)&=\sum_{i=1}^n(f_0(\x_i)-y_i)\nabla_{\!f} f(\x_i)\\
    &=K(\cdot,X)(f_0(X)-\y).
\end{align}

\textbf{Hessian operator: } The Hessian operator $\nabla^2_f L:\Hilbert\rightarrow\Hilbert$ for the square loss is given by,
\begin{subequations}\label{eq:hessian}
\begin{align}
    &\mc K:= \sum_{i=1}^n K(\cdot,\x_i)\otimes K(\cdot,\x_i),\\
    &\mc K\curly{f}(\z)\!=\!
    \sum_{i=1}^n K(\z,\x_i)f(\x_i)=K(\z,X)f(X).
\end{align}
\end{subequations}
Note that $\mc K$ is surjective on $\Xspan,$ and hence invertible when restricted to $\Xspan$.  
Note that when $\x_i\overset{\rm i.i.d.}\sim\mathbb{P}$, for some measure $\mathbb{P}$, the above summation, on rescaling by $\frac1n$, converges due to the strong law of large numbers as,
\begin{align}\label{eq:integral_operator}
    \lim_{n\rightarrow\infty}\frac{\mc K\curly{f}}n =\mc T_K\curly{f}:=\int K(\cdot,\x)f(\x)\dif\mathbb{P}(\x),
\end{align}
which is the integral operator associated with a kernel $K$.
The following lemma relates the spectra of $\mc K$ and $K(X,X)$.
\begin{proposition}[Nystr\"om extension]\label{prop:nystrom_extension}
For $1\leq i\leq n$, let $\lambda_i$ be an eigenvalue of $\mc K$, and $\psi_i$ its unit $\Hilbert$-norm eigenfunction, $\mc K\curly{\psi_i}=\lambda_i\psi_i$. Then $\lambda_i$ is also an eigenvalue of $K(X,X)$. Moreover if $\e_i,$ is a unit-norm eigenvector, $K(X,X)\e_i=\lambda_i\e_i$, we have,
\begin{align}
 \psi_i  = K(\cdot,X)\frac{\e_i}{\sqrt{\lambda_i}}=\sum_{j=1}^n K(\cdot, \x_j)\frac{e_{ij}}{\sqrt{\lambda_i}}.
\end{align}
\end{proposition}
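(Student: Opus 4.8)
The plan is to transfer the eigenproblem for the operator $\mc K$ on $\Hilbert$ to the finite-dimensional eigenproblem for the matrix $K(X,X)$, exploiting the fact that $\mc K$ has rank at most $n$ and that every eigenfunction associated with a nonzero eigenvalue must lie in $\Xspan=\mathrm{span}\{K(\cdot,\x_j)\}_{j=1}^n$. Throughout I treat the relevant $\lambda_i$ as strictly positive, since the displayed formula divides by $\sqrt{\lambda_i}$ and the zero eigenspace of $\mc K$ carries no information about $K(X,X)$.

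First I would evaluate the identity $\mc K\curly{\psi_i}=\lambda_i\psi_i$ on the set $X$. Using the explicit action $\mc K\curly{f}(\z)=K(\z,X)f(X)$ from \cref{eq:hessian}, evaluation at the points of $X$ gives $K(X,X)\psi_i(X)=\lambda_i\psi_i(X)$, i.e.\ the vector of evaluations $\psi_i(X)\in\Real^n$ is an eigenvector of $K(X,X)$ with eigenvalue $\lambda_i$ --- provided $\psi_i(X)\neq\mathbf 0$. To rule out the degenerate case I observe that if $\psi_i(X)=\mathbf 0$ then $\mc K\curly{\psi_i}=K(\cdot,X)\psi_i(X)=0$, which combined with $\mc K\curly{\psi_i}=\lambda_i\psi_i$ and $\lambda_i>0$ forces $\psi_i=0$, contradicting $\norm{\psi_i}_\Hilbert=1$. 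This already proves the first claim, that $\lambda_i$ is an eigenvalue of $K(X,X)$.

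Next I would recover $\psi_i$ itself. Rearranging the eigenequation gives $\psi_i=\tfrac1{\lambda_i}\mc K\curly{\psi_i}=\tfrac1{\lambda_i}K(\cdot,X)\psi_i(X)$, so $\psi_i$ is automatically of the Nystr\"om form $K(\cdot,X)\mathbf c$ with $\mathbf c=\psi_i(X)/\lambda_i$. Since $\psi_i(X)$ is an eigenvector of $K(X,X)$ for $\lambda_i$ and (by the strict ordering in \cref{def:topq_eignesystem}) this eigenspace is one-dimensional, we may write $\psi_i(X)=c\,\e_i$ for a scalar $c$, giving $\psi_i=\tfrac{c}{\lambda_i}K(\cdot,X)\e_i$. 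The only thing left is to pin down $c$.

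The scalar is fixed by the normalization $\norm{\psi_i}_\Hilbert=1$ together with the reproducing-property identity $\norm{K(\cdot,X)\mathbf v}_\Hilbert^2=\mathbf v\T K(X,X)\mathbf v$, which follows from $\inner{K(\cdot,\x_j),K(\cdot,\x_k)}_\Hilbert=K(\x_j,\x_k)$. Applying it with $\mathbf v=\tfrac{c}{\lambda_i}\e_i$ and using $K(X,X)\e_i=\lambda_i\e_i$ and $\norm{\e_i}=1$ yields $\norm{\psi_i}_\Hilbert^2=\tfrac{c^2}{\lambda_i^2}\cdot\lambda_i=\tfrac{c^2}{\lambda_i}$, so $c=\sqrt{\lambda_i}$ (choosing the sign of the eigenfunction). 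Substituting back gives $\psi_i=K(\cdot,X)\e_i/\sqrt{\lambda_i}$, the claimed formula. I expect the main subtlety to be not the calculation but the bookkeeping around $\lambda_i>0$ and eigenvalue multiplicity: the clean one-line identification $\psi_i(X)=c\,\e_i$ relies on the eigenvalue being simple, and for a repeated eigenvalue one would instead argue at the level of eigenspaces, matching an orthonormal eigenbasis of the $\lambda_i$-eigenspace of $K(X,X)$ to an $\Hilbert$-orthonormal basis of the corresponding eigenspace of $\mc K$ through the same $K(\cdot,X)/\sqrt{\lambda_i}$ map.
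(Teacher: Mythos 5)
Your proof is correct and follows essentially the same route as the paper's: both transfer the eigenproblem to $\Real^n$ by noting that $\psi_i$ must lie in the span of $\{K(\cdot,\x_j)\}_{j=1}^n$ with coefficient vector proportional to its evaluation vector $\psi_i(X)$, and both fix the scale via the reproducing-property identity $\norm{K(\cdot,X)\mathbf{v}}_\Hilbert^2=\mathbf{v}\T K(X,X)\mathbf{v}$. The only material difference is that you obtain $K(X,X)\psi_i(X)=\lambda_i\psi_i(X)$ by evaluating the eigenequation at $X$ directly, whereas the paper applies $\mc K$ a second time to reach $K(X,X)^2\betavec=\lambda K(X,X)\betavec$ and cancels a factor of $K(X,X)$ using its invertibility; your version is marginally cleaner, needing only $\lambda_i>0$ (which you justify) rather than invertibility of the full kernel matrix, and your closing remark on repeated eigenvalues addresses a case the paper's strict-ordering convention simply assumes away.
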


We review \EP{2.0} which is a closely related algorithm for kernel regression, \ie, when $Z=X$.

\subsection{Background on EigenPro}\label{subsec:Background on EigenPro}
\EP{1.0}, proposed in~\citep{ma2017diving}, is an iterative solver for solving the linear system in equation \eqref{eq:linear_system} based on a preconditioned stochastic gradient descent in a Hilbert space,
\begin{align}\label{eq:GD}
    &f^{t+1}=f^{t} - \eta\cdot\mc P\curly{\nabla_f L(f^t)}.
\end{align}
Here $\mc P$ is a preconditioner.
Due to its iterative nature, EigenPro can handle $\lambda=0$ in equation \cref{eq:linear_system}, corresponding to the problem of kernel interpolation, since in that case, the learned model satisfies $f(\x_i)=y_i$ for all samples in the training-set. 

It can be shown that the following iteration in $\Real^n$ 
\begin{align}
    \alphavec^{t+1}=\alphavec^{t}-\eta(\I_n-\Q)(K(X,X)\alphavec^t-\y),
\end{align}
emulates \cref{eq:GD} in $\Hilbert,$ see \Cref{lem:EigenPro_Rn_equivalence} in the Appendix.
The above iteration is a preconditioned version of the Richardson iteration, \citep{richardson1911ix}, with well-known convergence properties.
Here, $\Q$ as a rank-$q$ symmetric matrix obtained from the top-$q$ eigensystem of $K(X, X),$ with $q\ll n$. Importantly $\Q$ commutes with $K(X,X).$

The preconditioner, $\mc P$ acts to flatten the spectrum of the Hessian $\mc K$. In $\Real^n$, the matrix $ \I_n-\Q$ has the same effect on $K(X, X).$ The largest stable learning rate is then $\frac{2}{\lambda_{q+1}}$ instead of $\frac{2}{\lambda_1}$. Hence a larger $q,$ allows faster training when $\mc P$ is chosen appropriately.

\EP{2.0} proposed in \citep{ma2019kernel}, applies a stochastic approximation for $\mc P$ based on the Nystr\"om extension. We apply \EP{2.0} to perform an inexact projection step in our algorithm. 
\vspace{-0.15cm}
\section{Problem Formulation}

In this work we aim to learn a general kernel model to minimize the square loss over a training set.
We will solve the following infinite dimensional convex constrained optimization problem in a scalable manner.
\begin{subequations}\label{eq:loss_min}
\begin{align}
    &\minimize{f}\, L(f)=\sum_{i=1}^n (f(\x_i)-y_i)^2,\\
    &\subjectto\quad {f\in\Zspan}:=\text{span}\!\round{\curly{K(\cdot,\z_j)}_{j=1}^p}.
\end{align}
\end{subequations}

The term "scalable" refers to both large sample size ($n$) and large model size ($p$). For instance Figure \ref{fig:bigmodels} shows an experiment with  $5$ million samples and the model size of $1$ million. Our algorithm to solve this problem, \EP{3.0}, is derived using a projected preconditioned gradient descent iteration.
\section{\texorpdfstring{{\sf EigenPro\,3.0}}{} derivation: Projected preconditioned gradient descent}
\label{sec:derivation}

In this section we derive \EP{3.0-Exact-Projection} (\Cref{alg:eigenpro30exact}), a precursor to \EP{3.0}, to learn general kernel models. This algorithm is based on a function space projected gradient method. However it does not scale well. In \Cref{sec:speedup} we make it scalable by applying stochastic approximations, which finally yields \EP{3.0} (\Cref{alg:eigenpro30}).

We will apply a function-space projected gradient method to solve this problem,
\begin{align}\label{eq:PGM}
    f^{t+1} = \text{proj}_{\Zspan}\round{f^t-\eta\mc P\curly{\nabla_fL(f^t)}},
\end{align}
where $\text{proj}_{\Zspan}\round{u} := \argmin{f\in\Zspan}\norm{u-f}_\Hilbert^2$, and $\nabla_fL(f^t)$ is the Fr\'echet derivative at $f^t$ as given in \cref{eq:frechet_square_loss}, $\mc P$ is a preconditioning operator given in \cref{eq:preconditioner}, $\eta$ is a learning rate. 
Note that the operator $\text{proj}_{\Zspan}:\Hilbert\rightarrow\Zspan$ projects functions from $\Hilbert$ onto the subspace $\Zspan$, ensuring feasibility of the optimization problem.
\begin{remark} 
Note that even though \cref{eq:PGM} is an iteration over functions which are infinite dimensional objects $\curly{f^t}_{t\geq0}$, we can represent this iteration in finite dimensions as $\curly{\alphavec^t}_{t\geq0},$ where $\alphavec_t\in\Real^p$.
To see this, observe that $f^t\in\Zspan$, whereby we express it as,
\begin{align}
f^t=K(\cdot,Z)\alphavec^t\in\Hilbert,\qquad\text{for an  $\alphavec^t\in\Real^p.$}
\end{align}\label{eq:finite_rep}
Furthermore, the evaluation of $f^t$ above at $X$, is
\begin{align}\label{eq:evaluation}
    f^t(X)=K(X,Z)\alphavec^t\in\Real^n.
\end{align}
\end{remark}

\subsection{Gradient} Due to equations \eqref{eq:frechet_square_loss} and \eqref{eq:evaluation} together, the gradient is given by the function, 
\begin{subequations}\label{eq:grad}
\begin{align}
    \nabla_{\!f}L(f^t) &= K(\cdot,X)(f^t(X)-\y)\\
    &= K(\cdot,X)(K(X,Z)\alphavec^t-\y)\in\Xspan\\
\Xspan&:=\text{span}\!\round{\curly{K(\cdot,\x_i)}_{i=1}^n}.
\end{align}
\end{subequations}
Observe that the gradient does not lie in $\Zspan$ and hence a step of gradient descent would leave $\Zspan,$ and the constraint is violated.
This necessitates a projection onto $\Zspan$. For finitely generated sub-spaces such as $\Zspan$, the projection operation involves solving a finite dimensional linear system.

\subsection{\texorpdfstring{$\Hilbert$}{H}-norm projection}
Functions in $\Zspan$ can be expressed as $K(\cdot,Z)\thetavec$. Hence we can rewrite the projection problem in \cref{eq:PGM} as a minimization in $\Real^p$, with $\thetavec$ as the unknowns. Observe that, 
\begin{align*}
    \argmin{f}\norm{f-u}_\Hilbert^2 =\argmin{f}
    \inner{f,f}_\Hilbert  -2\inner{f,u}_\Hilbert  
\end{align*} 
since $\norm{u}_\Hilbert^2$ does not affect the solution. Further, using $f=K(\cdot,Z)\thetavec$, we can show that
\begin{align}
\inner{f,f}_\Hilbert  -2\inner{f,u}_\Hilbert 
    = \thetavec\tran K(Z,Z)\thetavec - 2\thetavec\tran u(Z).
\end{align}
This yields a simple method to calculate the projection onto $\Zspan.$
\begin{subequations}\label{eq:proj_equiv}
\begin{align}
    \text{proj}_{\Zspan}\{u\} & 
    =\argmin{f\in\Zspan}\norm{f-u}_\Hilbert^2= K(\cdot,Z)\wh\thetavec\\
    &=K(\cdot,Z)K(Z,Z)\inv u(Z)\in\Zspan,
    \end{align}
\end{subequations}
where
\begin{align*}
    \wh\thetavec
    &= \argmin{\thetavec\in\Real^p}\thetavec\tran K(Z,Z)\thetavec - 2\thetavec\tran u(Z)= K(Z,Z)\inv u(Z).
\end{align*}
Notice that $\wh \thetavec$ above is linear in $u,$ and $f^t(Z)=K(Z,Z)\alphavec^t.$ Hence we have the following lemma.

\begin{proposition}
[Projection]\label{lem:projection} The projection step in \cref{eq:PGM} can be simplified as,
\begin{align}\label{eq:projection_update}
    f^{t+1}= f^t - \eta\,K(\cdot,Z)&K(Z,Z)\inv\times\nonumber\\
    &\round{\mc P\curly{\nabla_fL(f^t)}(Z)} \in\Zspan.
\end{align}
\end{proposition}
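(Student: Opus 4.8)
The plan is to exploit the linearity of the orthogonal projection operator $\text{proj}_{\Zspan}$ together with the fact that $f^t$ already lies in the feasible set $\Zspan$. Starting from the projected gradient iteration in \cref{eq:PGM}, I would write the argument of the projection as the difference $f^t-\eta\,\mc P\curly{\nabla_fL(f^t)}$ and distribute the projection across this difference. Since orthogonal projection onto a closed linear subspace is a linear operator, this yields
\begin{align}
f^{t+1}=\text{proj}_{\Zspan}\round{f^t}-\eta\,\text{proj}_{\Zspan}\round{\mc P\curly{\nabla_fL(f^t)}}.
\end{align}

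The first term simplifies immediately: because $f^t=K(\cdot,Z)\alphavec^t\in\Zspan$ (which is maintained along the iteration by the finite representation in the Remark), the projection acts as the identity, so $\text{proj}_{\Zspan}\round{f^t}=f^t$. For the second term I would invoke the explicit closed form derived in \cref{eq:proj_equiv}, namely $\text{proj}_{\Zspan}\curly{u}=K(\cdot,Z)K(Z,Z)\inv u(Z)$, applied with $u=\mc P\curly{\nabla_fL(f^t)}\in\Hilbert$. Substituting gives $K(\cdot,Z)K(Z,Z)\inv\round{\mc P\curly{\nabla_fL(f^t)}(Z)}$, and combining the two terms reproduces the claimed update \cref{eq:projection_update}.

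The step that most deserves care is the distribution of the projection across the difference, i.e.\ its linearity. Rather than appealing abstractly to the fact that orthogonal projections onto closed subspaces are linear, I would note that the closed form in \cref{eq:proj_equiv} is manifestly linear in $u$: the evaluation map $u\mapsto u(Z)$ is linear into $\Real^p$, and $u(Z)\mapsto K(\cdot,Z)K(Z,Z)\inv u(Z)$ is linear in the finite-dimensional vector $u(Z)$; indeed the coefficient $\wh\thetavec=K(Z,Z)\inv u(Z)$ was already observed to be linear in $u$. This legitimizes the split without further justification, so the remaining manipulation is purely routine bookkeeping and no genuine obstacle remains.
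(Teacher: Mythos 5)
Your argument is correct and matches the paper's own reasoning: the paper likewise applies the closed form $\text{proj}_{\Zspan}\{u\}=K(\cdot,Z)K(Z,Z)\inv u(Z)$ from \cref{eq:proj_equiv}, invokes the linearity of $\wh\thetavec$ in $u$ to split the argument, and uses $f^t(Z)=K(Z,Z)\alphavec^t$ (equivalently, that $f^t\in\Zspan$ is fixed by the projection) to recover the first term. No gaps; your explicit justification of linearity via the evaluation map is a slightly more careful rendering of the same step.
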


Hence, in order to perform the update, we only need to know $\mc P\curly{\nabla_fL(f^t)}(Z)$, \ie, the evaluation of the preconditioned Fr\'echet derivative at the model centers.
This can be evaluated efficiently as described below.

\begin{algorithm}[t]
\caption{\EP{3.0\,Exact-Projection}}\label{alg:eigenpro30exact}
\begin{algorithmic}[1]
\Require Data $(X,y)$, centers $Z,$ initialization $\alphavec^0,$ preconditioning level $q.$
\State $(\Lambda,\E,\lambda_{q+1})\gets$ top-$q$ eigensystem of $K(X,X)$
\State $\Q\gets\E(\I_q-\lambda_{q+1}\Lambda\inv)\E\tran\in\Real^{n\times n}$
\While{Stopping criterion not reached}
\State $\g \gets K(X,Z)\alphavec-\y$
\State $\h\gets K(Z,X)(\I_n-\Q)\g$
\State $\thetavec\gets K(Z,Z)\inv  \h$ 

\State $\alphavec \gets \alphavec - \eta\, \thetavec$
\EndWhile
\end{algorithmic}
\end{algorithm}

\subsection{Preconditioner agnostic to the model}
Just like with usual gradient descent, the largest stable learning rate is governed by the largest eigenvalue of the Hessian of the objective in \cref{eq:loss_min}, which is given by \cref{eq:hessian}. The preconditioner $\mc P$ in \cref{eq:PGM} acts to reduce the effect of a few large eigenvalues. We choose $\mc P$ given in   \cref{eq:preconditioner}, just like \citep{ma2017diving}.
\begin{align}
\mc P:=\mc{I} - \sum_{i=1}^q \round{1-\frac{\lambda_{q+1}}{\lambda_{q}}} \psi_i\otimes\psi_i\quad :\Hilbert\rightarrow\Hilbert.\label{eq:preconditioner}
\end{align}
Recall from \Cref{sec:prelim} that $\psi_i$ are eigenfunctions of the Hessian $\mc K$, characterized in \Cref{prop:nystrom_extension}. Note that this preconditioner is independent of $Z$.  
Since $\nabla_f L(f^t)\in\Xspan$,  we only need to understand $\mc P$ on $\Xspan.$ 
Let $(\Lambda_q,\E_q,\lambda_{q+1})$ be the top-$q$ eigensystem of $K(X,X)$, see  \Cref{def:topq_eignesystem}. Define the rank-$q$ matrix,
\begin{align}
    \Q:=\E_q(\I_q-\lambda_{q+1}\Lambda_{q}\inv)\E_q\tran\in\Real^{n\times n}.
\end{align}
The following lemma outlines the computation involved in preconditioning. 
\begin{proposition}[Preconditioner] The action of $\mc P$ from \cref{eq:preconditioner} on functions in $\Xspan$ is given by,
\begin{align}
    \mc P\curly{K(\cdot,X) \a} =  K(\cdot,X)(\I_n-\Q)\a,
\end{align}
for all $\a\in\Real^m.$
\end{proposition}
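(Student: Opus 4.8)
The plan is to exploit linearity of $\mc P$ and reduce the claim to understanding how each rank-one operator $\psi_i\otimes\psi_i$ acts on an arbitrary element $K(\cdot,X)\a\in\Xspan$. Writing $\mc P=\mc I-\sum_{i=1}^q c_i\,\psi_i\otimes\psi_i$, where $c_i:=1-\tfrac{\lambda_{q+1}}{\lambda_i}$ is the weight attached to the $i$-th eigenfunction, the identity term already contributes $K(\cdot,X)\a$, so it suffices to show that $(\psi_i\otimes\psi_i)\curly{K(\cdot,X)\a}=K(\cdot,X)\e_i\e_i\tran\a$ for each $i$. Summing these contributions with the weights $c_i$ will then collapse to the matrix $\Q$.

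First I would compute the scalar $\inner{\psi_i,K(\cdot,X)\a}_\Hilbert$, since by definition $(\psi_i\otimes\psi_i)\curly{g}=\inner{\psi_i,g}_\Hilbert\,\psi_i$. Using the Nyström extension of \Cref{prop:nystrom_extension}, I substitute $\psi_i=K(\cdot,X)\tfrac{\e_i}{\sqrt{\lambda_i}}$ and expand the inner product of two linear combinations of kernel sections. The reproducing property $\inner{K(\cdot,\x_j),K(\cdot,\x_k)}_\Hilbert=K(\x_j,\x_k)$ turns this into the bilinear form $\tfrac{1}{\sqrt{\lambda_i}}\,\e_i\tran K(X,X)\a$. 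Invoking the eigenvector relation $K(X,X)\e_i=\lambda_i\e_i$, equivalently $\e_i\tran K(X,X)=\lambda_i\e_i\tran$, reduces this to $\sqrt{\lambda_i}\,\e_i\tran\a$.

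Plugging this scalar back in gives $(\psi_i\otimes\psi_i)\curly{K(\cdot,X)\a}=\sqrt{\lambda_i}(\e_i\tran\a)\,\psi_i$, and a second application of the Nyström formula for $\psi_i$ cancels the $\sqrt{\lambda_i}$ factors and leaves $K(\cdot,X)\e_i\e_i\tran\a$, as claimed. Finally I would sum over $i$ to obtain $\sum_{i=1}^q c_i\,(\psi_i\otimes\psi_i)\curly{K(\cdot,X)\a}=K(\cdot,X)\round{\sum_{i=1}^q c_i\,\e_i\e_i\tran}\a$, and recognize the bracketed matrix as $\E_q(\I_q-\lambda_{q+1}\Lambda_q\inv)\E_q\tran=\Q$ by \Cref{def:topq_eignesystem}. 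Subtracting from the identity term yields $\mc P\curly{K(\cdot,X)\a}=K(\cdot,X)(\I_n-\Q)\a$.

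I expect the only delicate point to be the bookkeeping of the $\sqrt{\lambda_i}$ normalizations: one factor enters through the inner product and the other through the re-expansion of $\psi_i$, and they must cancel exactly, so that the eigenvalue ratios surviving in $\Q$ come entirely from the preconditioner weights $c_i$ and not from the spectral normalization of the $\psi_i$. Everything else is a routine application of the reproducing property together with the orthonormality of the columns of $\E_q$.
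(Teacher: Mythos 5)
Your proof is correct and follows essentially the same route as the paper's own computation (carried out explicitly in the proofs of Lemma~\ref{lemma:Nystrom preconditioning} and Lemma~\ref{lem:EigenPro_Rn_equivalence} in the appendix): expand each $\psi_i\otimes\psi_i$ via the Nystr\"om extension of Proposition~\ref{prop:nystrom_extension}, use the reproducing property, and cancel the $\sqrt{\lambda_i}$ normalizations against the eigenvalue relation for $K(X,X)$. The only cosmetic difference is that you apply $\e_i\tran K(X,X)=\lambda_i\e_i\tran$ term by term, whereas the paper substitutes $K(X,X)=\E\Lambda\E\tran$ and cancels $\Lambda\inv\Lambda$ in matrix form; the two are interchangeable.
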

Since we know from \cref{eq:grad} that $\nabla_f L(f^t)=K(\cdot,X)(K(X,Z)\alphavec^t-\y)$, we have,
\begin{align*}
    \mc P\curly{\nabla_fL(f^t)}(Z)\!=\!K(Z,X)(\I_n\!-\!\Q)(K(X,Z)\alphavec^t\!-\!\y).
\end{align*}
The following lemma combines this with \Cref{lem:projection} to get the update equation for \Cref{alg:eigenpro30exact}.

\begin{lemma}[\Cref{alg:eigenpro30exact} iteration] The following iteration in $\Real^p$ emulates \cref{eq:PGM} in $\Hilbert$,
\begin{align}
    \alphavec^{t+1} 
    = \alphavec^t - \eta\, K(Z,Z)\inv &K(Z,X)(\I_n-\Q)\times\nonumber\\
    &(K(X,Z)\alphavec^t-\y).
\end{align}
\end{lemma}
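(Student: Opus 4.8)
The plan is to chain the two results already established, \Cref{lem:projection} (Projection) and the Preconditioner proposition, so that the abstract function-space iteration \cref{eq:PGM} collapses to a recursion purely on the coefficient vectors $\alphavec^t\in\Real^p$. Throughout I would use that every iterate is feasible, $f^t\in\Zspan$, so that it admits the finite representation $f^t=K(\cdot,Z)\alphavec^t$, and that the next iterate is likewise of the form $f^{t+1}=K(\cdot,Z)\alphavec^{t+1}$.

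First I would invoke \Cref{lem:projection} to rewrite \cref{eq:PGM} in the reduced form \cref{eq:projection_update}, namely $f^{t+1}=f^t-\eta\,K(\cdot,Z)K(Z,Z)\inv\round{\mc P\curly{\nabla_fL(f^t)}(Z)}$. This isolates the only nontrivial quantity that must be computed, the evaluation at the centers $Z$ of the preconditioned gradient.

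Next I would evaluate $\mc P\curly{\nabla_fL(f^t)}(Z)$. By \cref{eq:grad} the gradient lies in $\Xspan$ and equals $K(\cdot,X)\a$ with $\a=K(X,Z)\alphavec^t-\y\in\Real^n$. Applying the Preconditioner proposition, which states $\mc P\curly{K(\cdot,X)\a}=K(\cdot,X)(\I_n-\Q)\a$, and then taking the vector of evaluations at $Z$, gives $\mc P\curly{\nabla_fL(f^t)}(Z)=K(Z,X)(\I_n-\Q)(K(X,Z)\alphavec^t-\y)$, exactly the identity recorded just above the statement.

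Substituting this into \cref{eq:projection_update} and using $f^t=K(\cdot,Z)\alphavec^t$ yields $f^{t+1}=K(\cdot,Z)\round{\alphavec^t-\eta\,K(Z,Z)\inv K(Z,X)(\I_n-\Q)(K(X,Z)\alphavec^t-\y)}$. Since $f^{t+1}\in\Zspan$ is also written as $K(\cdot,Z)\alphavec^{t+1}$, matching coefficients produces the claimed recursion. The one step that requires care, and which I expect to be the main obstacle, is precisely this final identification of coefficients: it is legitimate only because the map $\alphavec\mapsto K(\cdot,Z)\alphavec$ is injective, equivalently because $K(Z,Z)$ is invertible so the generators $\curly{K(\cdot,\z_j)}_{j=1}^p$ are linearly independent in $\Hilbert$. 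Under the standing assumption that $K$ is positive definite on the distinct centers $Z$ this holds; alternatively one sidesteps the issue entirely by \emph{defining} $\alphavec^{t+1}$ through the displayed formula and simply verifying that $K(\cdot,Z)\alphavec^{t+1}$ reproduces the projected-gradient step, so that the $\Real^p$ iteration indeed emulates \cref{eq:PGM}.
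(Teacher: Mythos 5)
Your proof is correct and follows essentially the same route as the paper, which obtains the lemma by combining \Cref{lem:projection} with the Preconditioner proposition applied to the gradient formula in \cref{eq:grad}. Your closing remark on the injectivity of $\alphavec\mapsto K(\cdot,Z)\alphavec$ (equivalently, invertibility of $K(Z,Z)$, which the paper already assumes in writing $K(Z,Z)\inv$) simply makes explicit a point the paper leaves implicit.
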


\begin{table*}[t]
\centering
\begin{tabular}{|c||ccc|c|}
\hline
\multirow{2}{*}{Algorithm}  
& \multicolumn{3}{c||}{Compution}                 & \multirow{2}{*}{Memory} \\ \cline{2-4}
& \multicolumn{1}{c|}{Setup} 
& \multicolumn{2}{c||}{per iteration} 
&        
\\ \hline
\EP{3.0}                
& \multicolumn{1}{c|}{$s^2q$}      
& \multicolumn{2}{c||}{$p(m+s)+s(m+q)+T_{\rm ep2}$}              
& $s^2+sm+M_{\rm ep2}$ 
\\ \hline
\EP{3.0} ExactProjection 
& \multicolumn{1}{c|}{${nq^2+p^3}$}      
& \multicolumn{2}{c||}{${np+nq}$}              
&  $pn+n^2$      
\\ \hline
\FALKON                  
& \multicolumn{1}{c|}{$p^3$}      
& \multicolumn{2}{c||}{$np$}              
& $p^2$
\\ \hline
\end{tabular}

\caption{\label{tab:computational complexity}
   \textbf{Algorithm complexity.} Number of training samples $n$, number of model centers $p$, batch size $m$, Nystr\"om sub-sample size $s$, preconditioner level $q$.
    Here $T_{\rm ep2}$ is the time it takes to run \EP{2.0} for the approximate projection. In practice we only run $1$ epoch of \EP{2.0} for large scale experiments for which $T_{\rm ep2}=O(p^2)$. Similarly, $M_{\rm ep2}=O(p)$ is the memory rquired for running \EP{2.0}. Cost of kernel evaluations and number of classes are assumed to be $O(1)$.
} 

\end{table*}

\Cref{alg:eigenpro30exact} does not scale well to large models and large datasets, since it requires $O(np\vee p^2)$ memory and $O(np\vee p^3)$ FLOPS. We now propose  stochastic approximations that drastically make it scalable to both large models as well as large datasets.

\section{Upscaling via stochastic approximations}
\label{sec:speedup}

\begin{algorithm}[t]

\caption{\EP{3.0}}\label{alg:eigenpro30}
\begin{algorithmic}[1]
\Require Data $(X,\y)$, centers $Z$, batch size $m$, Nystr\"om size $s,$, preconditioner level $q.$
\State Fetch subsample $X_s\subseteq X$ of size $s$
\State $(\Lambda,\E,\lambda_{q+1})\gets$ top-$q$ eigensystem of $K(X_s,X_s)$
\State $\C\!=\! K(Z,X_s)\E(\Lambda^{-\!1}\!-\!\lambda_{q\!+\!1}\Lambda^{-\!2})\E\tran\in\Real^{p\times s}$

\While{Stopping criterion is not reached}
\State Fetch minibatch $(X_m,\y_m)$
\State $\g_m \gets K(X_m, Z)\alphavec - \y_m$ 
\State $\h \gets K(Z,X_m)\g_m-\C K(X_s,X_m)\g_m$
\State $\thetavec\leftarrow$ {\sf EigenPro\,2.0}$(Z,\h)$
\State $\alphavec\gets\alphavec- \frac{n}m\eta\,\thetavec$ \EndWhile
\end{algorithmic}
{\small 
Note: \EP{2.0}$(Z,\h)$ solves  $K(Z,Z)\thetavec=\h$ approximately\\
See Table \ref{tab:computational complexity} for a comparison of costs.
}
\end{algorithm}

\Cref{alg:eigenpro30exact} suffers 
from 3 main issues. It requires ---  (i)  access to entire dataset of size $O(n)$ at each iteration, (ii) $O(n^2)$ memory to calculate the preconditioner $\Q$, and (iii) $O(p^3)$ for the matrix inversion corresponding to an exact projection. This prevents scalability to large $n$ and $p$.

In this section we present 3 stochastic approximation schemes --- stochastic gradients, Nystr\"om approximated preconditioning, and inexact projection --- that drastically reduce the computational cost and memory requirements. These approximations together give us \Cref{alg:eigenpro30}.

\Cref{alg:eigenpro30exact} emulates \cref{eq:PGM}, whereas \Cref{alg:eigenpro30} is designed to emulate its approximation,
\begin{align}
    f^{t+1} = f^t-\tfrac{n}{m}\eta\cdot\wt{\text{proj}}_{\mathbb{Z}}\round{\mc P_s\curly{\wt\nabla_fL(f^t)}},
\end{align}
where $\wt\nabla_fL(f^t)$ is a stochastic gradient obtained from a sub-sample of size $m,$ $\mc P_s$ is a preconditioner obtained via a Nystr\"om extension based preconditioner from a subset of the data of size $s$, and $\wt{\text{proj}}_{\mathbb{Z}}$ is an inexact projection performed using \EP{2.0} to solve the projection equation $K(Z,Z)\thetavec=\h$.

\textbf{Stochastic gradients: }
We can replace the gradient with stochastic gradients, whereby
$\wt\nabla_fL(f^t)$ only depends on a batch $(X_m,\y_m)$ of size $m,$ denoted $X_m=\{\x_{i_j}\}_{j=1}^m$ and $\y_m=(y_{i_j})\in\Real^m$, 
\begin{align}\label{eq:stochastic_gradient}
    \wt\nabla_fL(f^t)
    &=K(\cdot, X_m) (K(X_m,Z)\alphavec-\y_m)\in\Xspan.
\end{align}

\begin{remark}
Here we need to scale the learning rate by $\frac{n}{m},$ to get unbiased estimates of $\nabla_fL(f^t)$.
\end{remark}

\textbf{Nystr\"om preconditioning: }
Previously, we obtained the preconditioner $\mc P$ from \cref{eq:preconditioner}, which requires access to all samples. We now use the Nystr\"om extension to approximate this preconditioner, see \citep{williams2000using}.
Consider a subset of size $s$, $X_s=\curly{\x_{i_k}}_{k=1}^s\subset X$. We introduce the Nystr\"om preconditioner,
\begin{align}
    \mc P_s:=\mc I-\sum_{i=1}^s\round{1-\frac{\lambda^{s}_{q+1}}{\lambda^{s}_i}}\psi^{s}_i\otimes \psi_i^{s}.
\end{align}
where $\psi^{s}_i$ are eigenfunctions of $\mc K^s:=\sum_{k=1}^s K(\cdot,\x_{i_k})\otimes K(\cdot,\x_{i_k})$. Note that $\mc K^s\approx\frac{s}{n}\mc K$ since both approximate $\mc T_K$ as shown in \cref{eq:integral_operator}. This preconditioner was first proposed in \citep{ma2019kernel}.

Next, we must understand the action of $\mc P_s$ on elements of $\Xspan$.
Let $(\Lambda_{q},\E_{q},\lambda_{q+1})$ be the top-$q$ eigensystem of $K(X_s, X_s)$. Define the rank-$q$ matrix,
\begin{align}
    \Q_s:=\E_{q}(\I_s-\lambda_{q+1}\Lambda_{q}\inv)\Lambda_{q}\inv\E_{q}\tran\in\Real^{s\times s}.
\end{align}

\begin{lemma}[Nystr\"om preconditioning]\label{lemma:Nystrom preconditioning}
Let $\a\in\Real^m$, and $X_m$ chosen like in \cref{eq:stochastic_gradient}, then, 
\begin{align*}
    \mc P_s\!\curly{ K(\cdot,X_m\!) \a}\!=\! K(\cdot,X_m\!)\a
    \!-\!  K(\cdot,X_s\!)\Q_s K(X_s,\!X_m\!) \a.
\end{align*}
\end{lemma}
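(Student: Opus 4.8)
The plan is to reduce the action of the operator $\mc P_s$ on the test function $K(\cdot,X_m)\a$ to a finite-dimensional matrix computation, exploiting the fact that the Nyström eigenfunctions $\psi^s_i$ themselves lie in $\text{span}(K(\cdot,X_s))\subset\Xspan$, so that every relevant inner product collapses to kernel evaluations via the reproducing property. First I would expand the definition of $\mc P_s$ applied to this function,
\begin{align*}
\mc P_s\curly{K(\cdot,X_m)\a}=K(\cdot,X_m)\a-\sum_{i}\round{1-\tfrac{\lambda^s_{q+1}}{\lambda^s_i}}\psi^s_i\,\inner{\psi^s_i,K(\cdot,X_m)\a}_\Hilbert,
\end{align*}
where I take the sum truncated to the top $q$ eigenfunctions, consistent with the rank-$q$ matrix $\Q_s$ appearing in the statement.

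The key step is to make each $\psi^s_i$ explicit. Applying \Cref{prop:nystrom_extension} to the operator $\mc K^s$ and the matrix $K(X_s,X_s)$, whose unit-norm eigenvectors I denote $\e_i$ with eigenvalues $\lambda_i=\lambda^s_i$, gives $\psi^s_i=K(\cdot,X_s)\,\e_i/\sqrt{\lambda_i}$. Substituting into the inner product and using the reproducing property $\inner{K(\cdot,\x_{s,k}),K(\cdot,\x_{m,j})}_\Hilbert=K(\x_{s,k},\x_{m,j})$ entrywise, the Gram cross-term becomes $\inner{K(\cdot,X_s),K(\cdot,X_m)}_\Hilbert=K(X_s,X_m)$, so that
\begin{align*}
\inner{\psi^s_i,K(\cdot,X_m)\a}_\Hilbert=\tfrac{1}{\sqrt{\lambda_i}}\,\e_i\tran K(X_s,X_m)\a.
\end{align*}

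Finally I would substitute both the explicit form of $\psi^s_i$ and this scalar coefficient back into the expansion and factor $K(\cdot,X_s)$ out on the left and $K(X_s,X_m)\a$ out on the right, collecting the remainder into
\begin{align*}
\sum_{i}\round{1-\tfrac{\lambda_{q+1}}{\lambda_i}}\tfrac1{\lambda_i}\e_i\e_i\tran=\E_q(\I_q-\lambda_{q+1}\Lambda_q\inv)\Lambda_q\inv\E_q\tran=\Q_s,
\end{align*}
which yields the claimed identity. The main obstacle, and the only place where care is genuinely needed, is the bookkeeping of the two separate $1/\sqrt{\lambda_i}$ factors --- one from the Nyström normalization of $\psi^s_i$ and one from the same normalization reappearing inside the inner product. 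Their product is what produces the extra $\Lambda_q\inv$ that distinguishes $\Q_s$ here from the matrix $\Q$ in the earlier Preconditioner proposition; intuitively, since $\mc P_s$ is built from $X_s$ yet acts on a function supported on the \emph{different} center set $X_m$, the output is pushed into the $K(\cdot,X_s)$ representation, and converting kernel evaluations at $X_s$ into representation coefficients is precisely what the additional $\Lambda_q\inv$ accomplishes.
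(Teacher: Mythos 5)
Your proposal is correct and follows essentially the same route as the paper's proof: expand $\mc P_s$ on $K(\cdot,X_m)\a$, substitute the Nystr\"om representation $\psi^s_i = K(\cdot,X_s)\e_i/\sqrt{\lambda_i}$ from Proposition~\ref{prop:nystrom_extension}, reduce the inner products to kernel matrices via the reproducing property, and collect the two $1/\sqrt{\lambda_i}$ factors into the extra $\Lambda_q\inv$ that defines $\Q_s$. Your closing remark correctly isolates the one point of genuine care --- the double normalization factor --- which is exactly where the paper's computation also concentrates.
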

Consequently, using \cref{eq:stochastic_gradient}, we get,
\begin{align}
    &\mc P_s\curly{\wt\nabla_{\!f}L(f^t)}(Z)=\nonumber\\
    &\quad \Big(K(Z,X_m)-  K(Z,X_s)\Q_s K(X_s,X_m)\Big)\times\nonumber\\
    &\qquad\big(K(X_m,Z)\alphavec^t-\y_m\big)\in\Real^p.
\end{align}

\textbf{Inexact projection: } 
The projection step in \Cref{alg:eigenpro30exact} requires the inverse of $K(Z,Z)$ which is computationally expensive. However this step is solving the $p\times p$ linear system
\begin{align}\label{eq:approx_projection}
    &K(Z,Z)\thetavec=\h\\  
    &\h:=\Big(K(Z,X_m)-  K(Z,X_s)\Q_s K(X_s,X_m)\Big)\g \nonumber\\
    &\g:=\big(K(X_m,Z)\alphavec^t-\y_m\big).\nonumber
\end{align}

\begin{figure*}[th]
        \centering 
        \begin{subfigure}{1\textwidth}
        \includegraphics[width=1.0\linewidth]{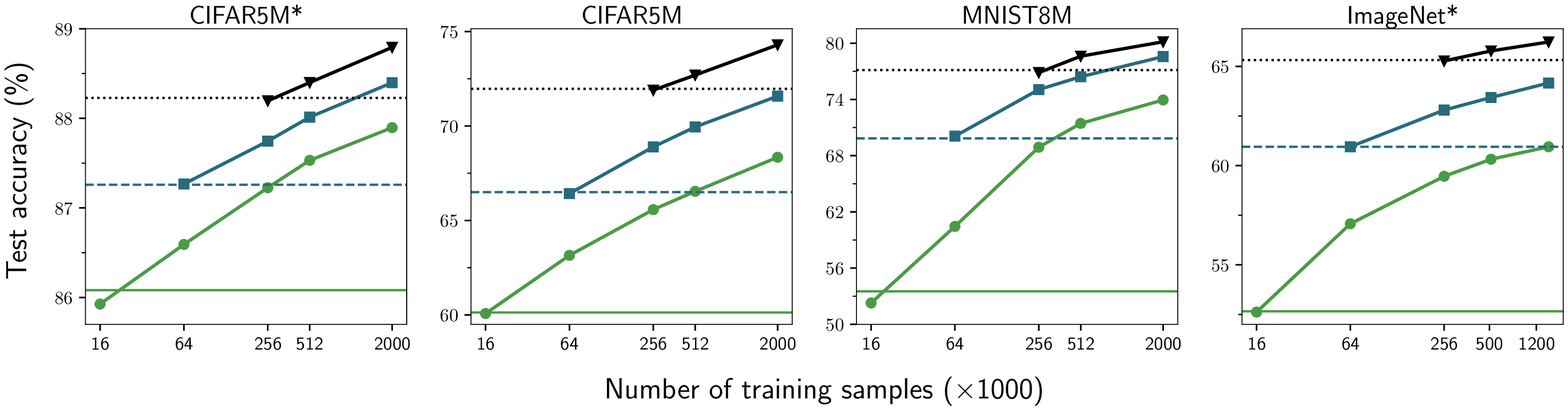}%
        \end{subfigure}
        
        \begin{subfigure}{1\textwidth}

        \hspace{0.25cm}
        \includegraphics[width=1.0\linewidth]{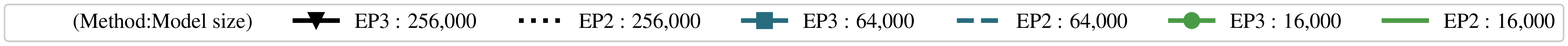}%
        \end{subfigure}
        
        \vspace{-0.3cm}
        \caption{\textbf{(Scaling number of training samples)} 
        Model centers are selected by random sub-sampling from the training data set.  The baselines (lines without markers) are obtained from a standard kernel machine solved by \EP{2.0} over the centers and their corresponding labels. 
        Lines with markers indicate the performance of kernel models trained with our algorithm (\EP{3.0})  after 50 epochs. 
        }
    \label{fig:accvsn:main}
\end{figure*}

\begin{figure*}[th]
        \centering      
        \begin{subfigure}{1\textwidth}
        \includegraphics[width=1.0\linewidth]{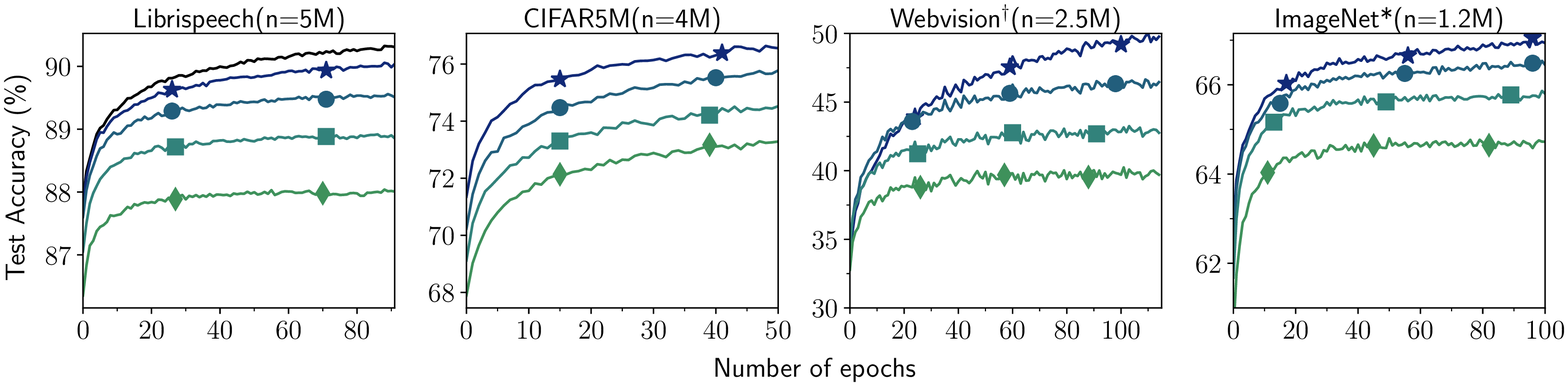}%
        \end{subfigure}

        \begin{subfigure}{1\textwidth}
        \hspace{1.1cm}
        \includegraphics[width=0.9\linewidth]{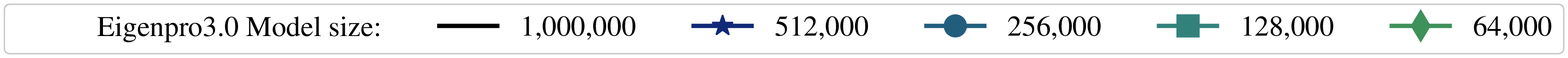}%
        \end{subfigure}
        \vspace{-20pt}
        \caption{(\textbf{Scaling model size}) Performance of \EP{3.0} for different number of model centers,  fixed number of data $n$.
        }

    \label{fig:bigmodels}
\end{figure*}

Notice that this is the kernel interpolation problem \EP{2.0} can solve.
This leads to the update,
\begin{align*}
    &\alphavec^{t+1}=\alphavec^t-\frac{n}{m}\eta\,\wh\thetavec^T\tag{\EP{3.0} update}
\end{align*}
where $\wh\thetavec^T$ is the solution to \cref{eq:approx_projection} after $T$ steps of \EP{2.0} given in \Cref{alg:eigenpro20} in the Appendix.
\Cref{alg:eigenpro30}, \EP{3.0}, implements the update above.

\begin{remark}[Details on inexact-projection using \EP{2.0}]
We apply $T$ steps of \EP{2.0} for the approximate projection. This algorithm itself applies a fast preconditioned SGD to solve the problem. The algorithm needs no hyperparameters adjustment. However, you need to choose $s$ and $q$. More details on this in the Appendix \ref{choiceofhypers}.
\end{remark}

\begin{remark}[Decoupling]
    There are two preconditioners involved in \EP{3.0}, a data preconditioner (for the stochastic gradient) which depends only on $X$, and a model preconditioner (for the inexact projection) which depends only on $Z$. This maintains the models decoupling from the training data.
\end{remark}

        

        




\textbf{Complexity analysis:}
We compare the complexity of the run-time and memory requirement of \Cref{alg:eigenpro30} and \Cref{alg:eigenpro30exact} with \FALKON~solver in \Cref{tab:computational complexity}.

\section{Real data experiments}\label{sec:expts}

\normalsize In this section, we demonstrate that our method can effectively scale both the size of the model and the number of training samples. We show that both of these factors are crucial for a better performance. We perform experiments on these datasets: (1) CIFAR10, CIFAR10\footnote[1]{\label{mobilenet2}feature extraction using MobileNetV2} \citep{krizhevsky2009learning}, (2) CIFAR5M, CIFAR5M$^*$ \citep{nakkiran2020deep}, (3) ImageNet$^*$, \citep{deng2009imagenet}, (4) MNIST, \citep{lecun1998mnist},  (5) {MNIST8M}, \citep{loosli2007training}, (6) Fashion-MNIST, \citep{xiao2017fashion}, (7) Webvision\footnote[2]{feature extraction using ResNet-18}, \citep{li2017webvision}, and (8) Librispeech, \citep{panayotov2015librispeech}. Details about datasets can be found in Appendix \ref{sec:expt_details}. Our method can be implemented with any kernel function, but for the purpose of this demonstration, we chose to use the Laplace kernel due to
its simplicity and empirical effectiveness. We treat multi-class classification problems as multiple independent binary regression problems, with targets from $\curly{0,1}$. The final prediction is determined by selecting the class with the highest predicted value among the $K$ classes.

\begin{figure*}[th]
  \centering
    \includegraphics[width=0.8\linewidth]{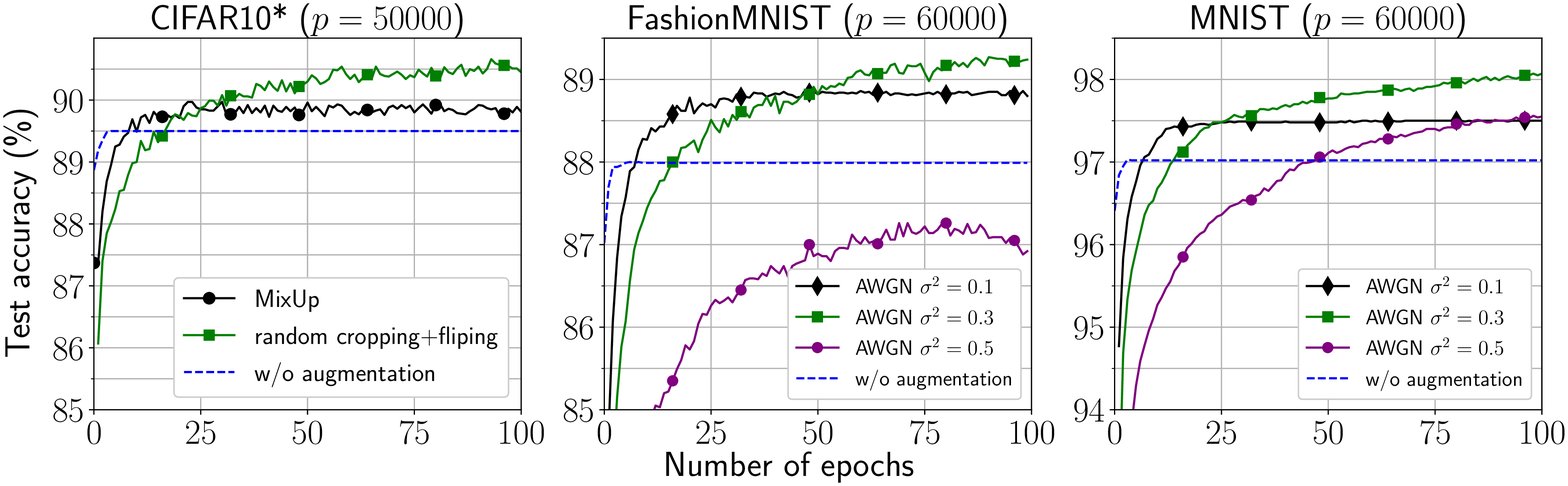}
  \label{fig:CIFAR10_AUGMENT}
  \vspace{-2mm}
    \caption{\label{fig:augmentation}(\textbf{Data augmentation for kernel models}) Entire original dataset was used as as the model centers $Z$. The model was trained using the augmented set $X$ (without the original data). MixUp and Crop+Flip augmentation was used for   CIFAR10. Additive White Gaussian Noise (AWGN) augmentation was used for MNIST and FashionMNIST.
    }
\end{figure*}

\begin{figure}[th]
    \vspace{-20pt}
        \centering        \includegraphics[width=0.9\linewidth]{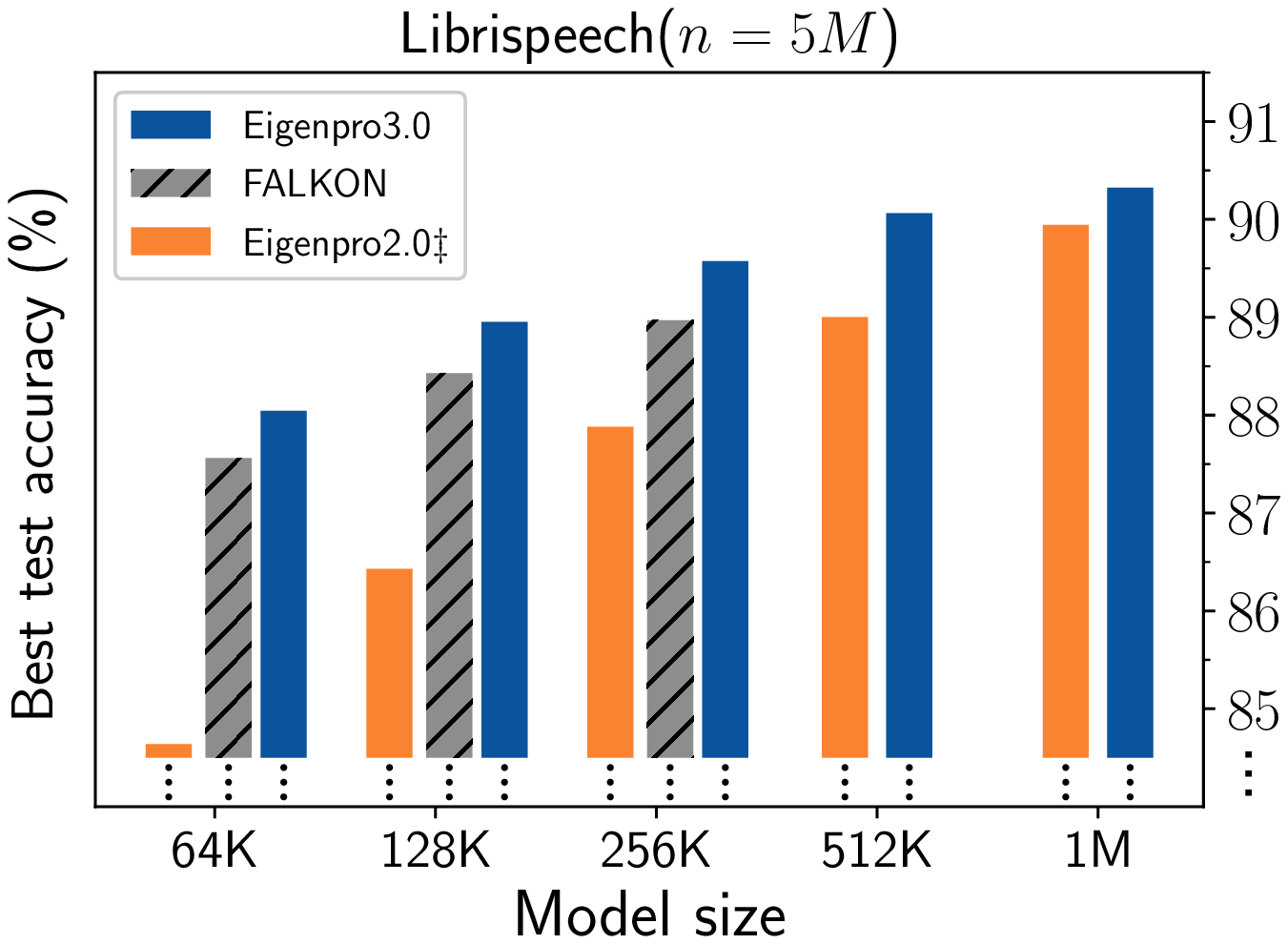}%
     \vspace{-10pt}   
        \caption{ Comparison of \EP{3.0} (EP3) performance with  \FALKON~ and \EP{2.0} (EP2).\\
        $^\ddagger$ For \EP{2.0} models size and number of training samples are the same. $n=5$M was used for all other methods. FALKON could not be run for model size larger than $256$K centers due to memory limitations. For each model size, \EP{2.0} and \EP{3.0} use the same centers. 
        }
    \vspace{-10pt}
    \label{fig:falkonvsep2vsep3}
\end{figure}

\textbf{Scaling the number of training samples: }
Figure \ref{fig:accvsn:main} illustrates that for a fixed model size, increasing the number of training samples leads to a significant improvement in performance, as indicated by the the lines with marker. As a point of comparison, we also include results from a standard kernel machine using the same centers, represented by the horizontal lines without markers. In this experiment, the centers were randomly selected from the data and they are a subset of training data.

\textbf{Scaling the model size: } Figure \ref{fig:bigmodels} shows that, when the number of training samples is fixed, larger models perform significantly better. 

To the best of our knowledge, \FALKON ~ is the only method capable of training general kernel models with model size larger than $100,\!000$ centers. Our memory resources of $340$GB RAM allowed us to handle up to $256,\!000$ centers using \FALKON. Figure \ref{fig:falkonvsep2vsep3} illustrates that our method outperforms \FALKON ~ by utilizing larger models. Moreover, for a fixed set of centers, training on greater number of data will boost the performance. Therefore, \EP{3.0} outperforms \EP{2.0} by training on more data.

\textbf{Data augmentation for kernel models: }
Data augmentation is a crucial technique for improving the performance of deep networks. We demonstrate its benefits for general kernel models by selecting the model centers $Z$ to be the original training data and the train set $X$ to be the virtual examples generated through augmentation. To the best of our knowledge, this is the first implementation of data augmentation for kernel models at this scale. Figure \ref{fig:augmentation} shows we have significant improvements in accuracy. 


\textbf{Data processing details: } We performed our experiment on CIFAR10 with feature extraction, raw images of MNIST and FashionMNIST. For CIFAR10 augmentation, we apply random cropping and flipping before feature extraction. We also apply mix-up augmentation method from \citep{zhang2017mixup} after feature extraction. For MNIST and FashionMNIST augmentation we added Gaussian noise with different variances. To the best of our knowledge, this is the first implementation of data augmentation for kernel models at this scale. Figure \ref{fig:augmentation} shows we have significant improvements in accuracy.

\section{Conclusions and Outlook}
\label{sec:conclusions}
\begin{table*}[h]
    \centering

    \begin{tabular}{cccccc}\toprule
    Dataset
    & Model
    & $p=100$ 
    & $p=1000$ 
    & $p=10000$ 
    \\\midrule
    CIFAR10
    & $k-$means 
    & \textbf{\small 36.24}
    & \textbf{\small 45.12}
    & \textbf{\small 52.72}
    \\
    ($n=50000$)
    & random 
    &{\small$33.37\pm0.50$}
    &{\small$44.19\pm0.09$}
    &{\small$49.92\pm0.08$}

    \\\midrule
    CIFAR10*
    & $k-$means 
    &{\small\textbf{82.69}}
    &{\small\textbf{86.58}}
    &{\small\textbf{89.11}}
    \\
    ($n=50000$)
    & random 
    &{\small$74.29\pm0.44$}
    &{\small$84.38\pm0.15$}
    &{\small$86.58\pm0.06$}
    \\\midrule 
    MNIST
    & $k-$means 
    &{\small$\boldsymbol{91.89}$}
    &{\small$\boldsymbol{95.96}$}
    &{\small$\boldsymbol{97.69}$}
    \\
    ($n=60000$)
    & random 
    &{\small$87.24\pm0.015$}
    &{\small$94.96\pm0.102$}
    &{\small$97.31\pm0.004$}
    \\\midrule 
    FashionMNIST
    & $k-$means 
    &{\small$\boldsymbol{78.66}$}
    &{\small$\boldsymbol{85.55}$}
    &{\small$\boldsymbol{88.13}$}
    \\
    ($n=60000$)
    & random 
    &{\small$76.24\pm0.003$}
    &{\small$84.59\pm0.069$}
    &{\small$87.84\pm0.036$}

    \\\bottomrule
    \end{tabular}
    \caption{ \label{tab:flexible}(\textbf{Benefits of model flexibility}) 
    Comparison between random centers selection and k-means clustering using \EP{3.0}. Here $p$ denotes the number of centers.\\
    ($^*$ indicates a preprocessing step.)}
 
\end{table*}

The remarkable success of Deep Learning has been in large part due to very large models trained on massive datasets. Any credible alternative requires a  path to scaling  model sizes as well as the size of the training set.
Traditional kernel methods suffer from a severe scaling limitation as their model sizes are coupled  to size of the training set. Yet, as we have seen in numerous experiments, performance improves with the amount of data far beyond the point where the amount of data exceeds the number of model centers. Other solvers, such as \FALKON~\citep{rudi2017falkon}, \GPytroch~ \citep{gardner2018product} \textsf{\small GPFlow} \cite{GPflow2017}, allow for unlimited data but limit the model size.  

In this work we provide a proof of concept showing that for kernel methods the barrier of limited model size can be overcome. 
Indeed, with a fixed model size {our proposed algorithm,} \EP{3.0} has no specific limitations on the number of samples it can use in training. As a simple illustration  Fig.~\ref{fig:100Mdata} demonstrates the results of a kernel machine trained on an augmented FashionMNIST dataset with $1.2\!\times\!10^8$ data samples.
While increasing the model size is more challenging, we have achieved 1 million centers and see no fundamental mathematical barrier  to increasing 
the number of centers to 10 million and beyond. Furthermore, as \EP{3.0} is highly parallelizable, we anticipate future scaling to tens of millions of centers trained on billions of data points, approaching the scale of modern neural networks.

\begin{figure}[th]
    \vspace{-20pt} 
    \centering      
    \includegraphics[width=0.95\linewidth]{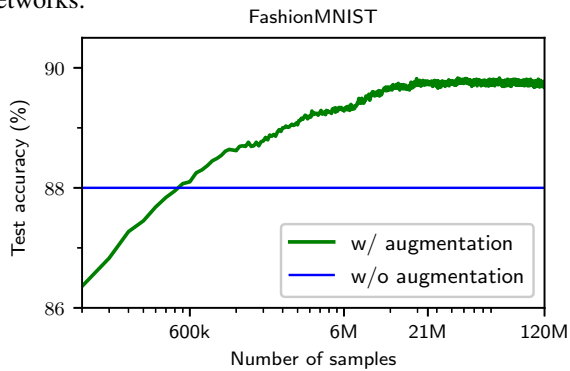}%
     \vspace{-5pt}   
        \caption{(\textbf{Training on 120 million samples)} The plot shows the results of training on 120 million data samples generated by adding Gaussian noise to the original 60,000 images. The model centers are the images.
        }
    \vspace{-10pt}
    \label{fig:100Mdata}
\end{figure}

This line of research opens a pathway for a principled alternative to deep neural networks. While in our experiments we focused primarily on Laplace kernels for their simplicity and effectiveness, recently developed kernels such as NTK, CNTK, and other neural kernels from \citep{shankar2020neural}, can be used to achieve state-of-the-art performance on various datasets. Our approach is compatible with any choice of kernel, and furthermore, can be adapted to kernels that learn features, such as Recursive Feature Machines~\citep{radhakrishnan2022feature}.

\vspace{-0.2cm}

Finally, we note that while the set of centers $Z$ can be arbitrary, the choice of centers can affect model performance. Table~\ref{tab:flexible} (see also~\citep{que2016back}) demonstrates that using centers selected via the $k$-means algorithm often results in notable performance improvements when the number of centers is smaller than the number of samples. Further research should explore criteria for optimal center selection, incorporating recent advances, such as Rocket-propelled Cholesky~\citep{chen2022randomly},  into \EP{3.0} to improve both model selection and construction of preconditioners.

\subsubsection*{Acknowledgments}
We thank Giacomo Meanti for help with using \FALKON, and Like Hui for  providing us with the extracted features of the Librispeech dataset. \\
We are grateful for the support from the National Science Foundation (NSF) and the Simons Foundation for the Collaboration on the Theoretical Foundations of Deep Learning (\url{https://deepfoundations.ai/}) through awards DMS-2031883 and \#814639  and the TILOS institute (NSF CCF-2112665). This work was done in part while the authors were visiting the Simons Institute for the Theory of Computing. This work used NVIDIA V100 GPUs NVLINK and HDR IB (Expanse GPU) at SDSC Dell Cluster through allocation TG-CIS220009 and also, Delta system at the National Center for Supercomputing Applications through allocation bbjr-delta-gpu from the Advanced Cyberinfrastructure Coordination Ecosystem: Services \& Support (ACCESS) program, which is supported by National Science Foundation grants \#2138259, \#2138286, \#2138307, \#2137603, and \#2138296.
Prior to 09/01/2022, we used the Extreme Science and Engineering Discovery Environment (XSEDE) \citep{XSEDE}, which is supported by NSF grant number ACI-1548562, Expanse CPU/GPU compute nodes, and allocations TG-CIS210104 and TG-CIS220009.

\clearpage

\bibliographystyle{icml2023.bst}
\bibliography{aux/ref}

\begin{thebibliography}{51}
\providecommand{\natexlab}[1]{#1}
\providecommand{\url}[1]{\texttt{#1}}
\expandafter\ifx\csname urlstyle\endcsname\relax
  \providecommand{\doi}[1]{doi: #1}\else
  \providecommand{\doi}{doi: \begingroup \urlstyle{rm}\Url}\fi

\bibitem[Aronszajn(1950)]{aronszajn1950theory}
Aronszajn, N.
\newblock Theory of reproducing kernels.
\newblock \emph{Transactions of the American mathematical society}, 68\penalty0
  (3):\penalty0 337--404, 1950.

\bibitem[Arora et~al.(2020)Arora, Du, Li, Salakhutdinov, Wang, and
  Yu]{arora2020harnessing}
Arora, S., Du, S.~S., Li, Z., Salakhutdinov, R., Wang, R., and Yu, D.
\newblock Harnessing the power of infinitely wide deep nets on small-data
  tasks.
\newblock In \emph{International Conference on Learning Representations}, 2020.
\newblock URL \url{https://openreview.net/forum?id=rkl8sJBYvH}.

\bibitem[Baldi et~al.(2014)Baldi, Sadowski, and Whiteson]{HIGGS}
Baldi, P., Sadowski, P., and Whiteson, D.
\newblock Searching for exotic particles in high-energy physics with deep
  learning.
\newblock \emph{Nature communications}, 5\penalty0 (1):\penalty0 1--9, 2014.

\bibitem[Camoriano et~al.(2016)Camoriano, Angles, Rudi, and
  Rosasco]{camoriano2016nytro}
Camoriano, R., Angles, T., Rudi, A., and Rosasco, L.
\newblock Nytro: When subsampling meets early stopping.
\newblock In \emph{Artificial Intelligence and Statistics}, pp.\  1403--1411.
  PMLR, 2016.

\bibitem[Chen et~al.(2010)Chen, Welling, and Smola]{chen2012super}
Chen, Y., Welling, M., and Smola, A.
\newblock Super-samples from kernel herding.
\newblock \emph{UAI'10: Proceedings of the Twenty-Sixth Conference on
  Uncertainty in Artificial Intelligence, Pages 109–116}, 2010.

\bibitem[Chen et~al.(2022)Chen, Epperly, Tropp, and Webber]{chen2022randomly}
Chen, Y., Epperly, E.~N., Tropp, J.~A., and Webber, R.~J.
\newblock Randomly pivoted cholesky: Practical approximation of a kernel matrix
  with few entry evaluations.
\newblock \emph{arXiv preprint arXiv:2207.06503}, 2022.

\bibitem[Deng et~al.(2009)Deng, Dong, Socher, Li, Li, and
  Fei-Fei]{deng2009imagenet}
Deng, J., Dong, W., Socher, R., Li, L.-J., Li, K., and Fei-Fei, L.
\newblock Imagenet: A large-scale hierarchical image database.
\newblock In \emph{2009 IEEE conference on computer vision and pattern
  recognition}, pp.\  248--255. Ieee, 2009.

\bibitem[Gardner et~al.(2018)Gardner, Pleiss, Wu, Weinberger, and
  Wilson]{gardner2018product}
Gardner, J., Pleiss, G., Wu, R., Weinberger, K., and Wilson, A.
\newblock Product kernel interpolation for scalable gaussian processes.
\newblock In \emph{International Conference on Artificial Intelligence and
  Statistics}, pp.\  1407--1416. PMLR, 2018.

\bibitem[Geifman et~al.(2020)Geifman, Yadav, Kasten, Galun, Jacobs, and
  Ronen]{HGammaKernel}
Geifman, A., Yadav, A., Kasten, Y., Galun, M., Jacobs, D., and Ronen, B.
\newblock On the similarity between the laplace and neural tangent kernels.
\newblock In \emph{Advances in Neural Information Processing Systems}, 2020.

\bibitem[Hoffmann et~al.(2022)Hoffmann, Borgeaud, Mensch, Buchatskaya, Cai,
  Rutherford, de~las Casas, Hendricks, Welbl, Clark, Hennigan, Noland,
  Millican, van~den Driessche, Damoc, Guy, Osindero, Simonyan, Elsen, Vinyals,
  Rae, and Sifre]{hoffmann2022training}
Hoffmann, J., Borgeaud, S., Mensch, A., Buchatskaya, E., Cai, T., Rutherford,
  E., de~las Casas, D., Hendricks, L.~A., Welbl, J., Clark, A., Hennigan, T.,
  Noland, E., Millican, K., van~den Driessche, G., Damoc, B., Guy, A.,
  Osindero, S., Simonyan, K., Elsen, E., Vinyals, O., Rae, J.~W., and Sifre, L.
\newblock An empirical analysis of compute-optimal large language model
  training.
\newblock In Oh, A.~H., Agarwal, A., Belgrave, D., and Cho, K. (eds.),
  \emph{Advances in Neural Information Processing Systems}, 2022.
\newblock URL \url{https://openreview.net/forum?id=iBBcRUlOAPR}.

\bibitem[Hui \& Belkin(2021)Hui and Belkin]{hui2020evaluation}
Hui, L. and Belkin, M.
\newblock Evaluation of neural architectures trained with square loss vs
  cross-entropy in classification tasks.
\newblock In \emph{International Conference on Learning Representations}, 2021.
\newblock URL \url{https://openreview.net/forum?id=hsFN92eQEla}.

\bibitem[Jacot et~al.(2018)Jacot, Gabriel, and Hongler]{jacot2018neural}
Jacot, A., Gabriel, F., and Hongler, C.
\newblock Neural tangent kernel: Convergence and generalization in neural
  networks.
\newblock \emph{Advances in neural information processing systems}, 31, 2018.

\bibitem[Jurafsky(2000)]{jurafsky2000speech}
Jurafsky, D.
\newblock \emph{Speech \& language processing}.
\newblock Pearson Education India, 2000.

\bibitem[Kaplan et~al.(2020)Kaplan, McCandlish, Henighan, Brown, Chess, Child,
  Gray, Radford, Wu, and Amodei]{kaplan2020scaling}
Kaplan, J., McCandlish, S., Henighan, T., Brown, T.~B., Chess, B., Child, R.,
  Gray, S., Radford, A., Wu, J., and Amodei, D.
\newblock Scaling laws for neural language models.
\newblock \emph{arXiv preprint arXiv:2001.08361}, 2020.

\bibitem[Kimeldorf \& Wahba(1970)Kimeldorf and
  Wahba]{kimeldorf1970correspondence}
Kimeldorf, G.~S. and Wahba, G.
\newblock A correspondence between bayesian estimation on stochastic processes
  and smoothing by splines.
\newblock \emph{The Annals of Mathematical Statistics}, 41\penalty0
  (2):\penalty0 495--502, 1970.

\bibitem[Krizhevsky et~al.(2009)Krizhevsky, Hinton,
  et~al.]{krizhevsky2009learning}
Krizhevsky, A., Hinton, G., et~al.
\newblock Learning multiple layers of features from tiny images.
\newblock \emph{Citeseer}, 2009.

\bibitem[Krizhevsky et~al.(2012)Krizhevsky, Sutskever, and Hinton]{alexnet2012}
Krizhevsky, A., Sutskever, I., and Hinton, G.~E.
\newblock Imagenet classification with deep convolutional neural networks.
\newblock In Pereira, F., Burges, C., Bottou, L., and Weinberger, K. (eds.),
  \emph{Advances in Neural Information Processing Systems}, volume~25. Curran
  Associates, Inc., 2012.
\newblock URL
  \url{https://proceedings.neurips.cc/paper/2012/file/c399862d3b9d6b76c8436e924a68c45b-Paper.pdf}.

\bibitem[LeCun(1998)]{lecun1998mnist}
LeCun, Y.
\newblock The mnist database of handwritten digits.
\newblock \emph{http://yann. lecun. com/exdb/mnist/}, 1998.

\bibitem[Lee et~al.(2020)Lee, Schoenholz, Pennington, Adlam, Xiao, Novak, and
  Sohl-Dickstein]{lee2020finite}
Lee, J., Schoenholz, S., Pennington, J., Adlam, B., Xiao, L., Novak, R., and
  Sohl-Dickstein, J.
\newblock Finite versus infinite neural networks: an empirical study.
\newblock \emph{Advances in Neural Information Processing Systems},
  33:\penalty0 15156--15172, 2020.

\bibitem[Li et~al.(2017)Li, Wang, Li, Agustsson, and Van~Gool]{li2017webvision}
Li, W., Wang, L., Li, W., Agustsson, E., and Van~Gool, L.
\newblock Webvision database: Visual learning and understanding from web data.
\newblock \emph{arXiv preprint arXiv:1708.02862}, 2017.

\bibitem[Li et~al.(2019)Li, Wang, Yu, Du, Hu, Salakhutdinov, and
  Arora]{li2019enhanced}
Li, Z., Wang, R., Yu, D., Du, S.~S., Hu, W., Salakhutdinov, R., and Arora, S.
\newblock Enhanced convolutional neural tangent kernels.
\newblock \emph{arXiv preprint arXiv:1911.00809}, 2019.

\bibitem[Loosli et~al.(2007)Loosli, Canu, and Bottou]{loosli2007training}
Loosli, G., Canu, S., and Bottou, L.
\newblock Training invariant support vector machines using selective sampling.
\newblock \emph{Large scale kernel machines}, 2, 2007.

\bibitem[Ma \& Belkin(2017)Ma and Belkin]{ma2017diving}
Ma, S. and Belkin, M.
\newblock Diving into the shallows: a computational perspective on large-scale
  shallow learning.
\newblock \emph{Advances in neural information processing systems}, 30, 2017.

\bibitem[Ma \& Belkin(2019)Ma and Belkin]{ma2019kernel}
Ma, S. and Belkin, M.
\newblock Kernel machines that adapt to gpus for effective large batch
  training.
\newblock \emph{Proceedings of Machine Learning and Systems}, 1:\penalty0
  360--373, 2019.

\bibitem[Ma et~al.(2018)Ma, Bassily, and Belkin]{ma2018power}
Ma, S., Bassily, R., and Belkin, M.
\newblock The power of interpolation: Understanding the effectiveness of sgd in
  modern over-parametrized learning.
\newblock \emph{International Conference on Machine Learning}, pp.\
  3325--3334, 2018.

\bibitem[Matthews et~al.(2017)Matthews, {van der Wilk}, Nickson, Fujii,
  {Boukouvalas}, {Le{\'o}n-Villagr{\'a}}, Ghahramani, and Hensman]{GPflow2017}
Matthews, A. G. d.~G., {van der Wilk}, M., Nickson, T., Fujii, K.,
  {Boukouvalas}, A., {Le{\'o}n-Villagr{\'a}}, P., Ghahramani, Z., and Hensman,
  J.
\newblock { {GP}flow: A {G}aussian process library using {T}ensor{F}low}.
\newblock \emph{Journal of Machine Learning Research}, 18\penalty0
  (40):\penalty0 1--6, apr 2017.
\newblock URL \url{http://jmlr.org/papers/v18/16-537.html}.

\bibitem[Meanti et~al.(2020)Meanti, Carratino, Rosasco, and
  Rudi]{meanti2020kernel}
Meanti, G., Carratino, L., Rosasco, L., and Rudi, A.
\newblock Kernel methods through the roof: handling billions of points
  efficiently.
\newblock \emph{Advances in Neural Information Processing Systems},
  33:\penalty0 14410--14422, 2020.

\bibitem[Nakkiran et~al.(2021)Nakkiran, Neyshabur, and
  Sedghi]{nakkiran2020deep}
Nakkiran, P., Neyshabur, B., and Sedghi, H.
\newblock The deep bootstrap framework: Good online learners are good offline
  generalizers.
\newblock \emph{International Conference on Learning Representations}, 2021.

\bibitem[Panayotov et~al.(2015)Panayotov, Chen, Povey, and
  Khudanpur]{panayotov2015librispeech}
Panayotov, V., Chen, G., Povey, D., and Khudanpur, S.
\newblock Librispeech: an asr corpus based on public domain audio books.
\newblock In \emph{2015 IEEE international conference on acoustics, speech and
  signal processing (ICASSP)}, pp.\  5206--5210. IEEE, 2015.

\bibitem[Pedregosa et~al.(2011)Pedregosa, Varoquaux, Gramfort, Michel, Thirion,
  Grisel, Blondel, Prettenhofer, Weiss, Dubourg, Vanderplas, Passos,
  Cournapeau, Brucher, Perrot, and Duchesnay]{scikit-learn}
Pedregosa, F., Varoquaux, G., Gramfort, A., Michel, V., Thirion, B., Grisel,
  O., Blondel, M., Prettenhofer, P., Weiss, R., Dubourg, V., Vanderplas, J.,
  Passos, A., Cournapeau, D., Brucher, M., Perrot, M., and Duchesnay, E.
\newblock Scikit-learn: Machine learning in {P}ython.
\newblock \emph{Journal of Machine Learning Research}, 12:\penalty0 2825--2830,
  2011.

\bibitem[Poggio \& Girosi(1990)Poggio and Girosi]{poggio1990networks}
Poggio, T. and Girosi, F.
\newblock Networks for approximation and learning.
\newblock \emph{Proceedings of the IEEE}, 78\penalty0 (9):\penalty0 1481--1497,
  1990.

\bibitem[Que \& Belkin(2016)Que and Belkin]{que2016back}
Que, Q. and Belkin, M.
\newblock Back to the future: Radial basis function networks revisited.
\newblock In \emph{Artificial intelligence and statistics}, pp.\  1375--1383.
  PMLR, 2016.

\bibitem[Radhakrishnan et~al.(2022{\natexlab{a}})Radhakrishnan, Beaglehole,
  Pandit, and Belkin]{radhakrishnan2022feature}
Radhakrishnan, A., Beaglehole, D., Pandit, P., and Belkin, M.
\newblock Feature learning in neural networks and kernel machines that
  recursively learn features.
\newblock \emph{arXiv preprint arXiv:2212.13881}, 2022{\natexlab{a}}.

\bibitem[Radhakrishnan et~al.(2022{\natexlab{b}})Radhakrishnan, Stefanakis,
  Belkin, and Uhler]{radhakrishnan2022simple}
Radhakrishnan, A., Stefanakis, G., Belkin, M., and Uhler, C.
\newblock Simple, fast, and flexible framework for matrix completion with
  infinite width neural networks.
\newblock \emph{Proceedings of the National Academy of Sciences}, 119\penalty0
  (16):\penalty0 e2115064119, 2022{\natexlab{b}}.

\bibitem[Rahimi \& Recht(2007)Rahimi and Recht]{rahimi2007random}
Rahimi, A. and Recht, B.
\newblock Random features for large-scale kernel machines.
\newblock \emph{Advances in neural information processing systems}, 20, 2007.

\bibitem[Richardson(1911)]{richardson1911ix}
Richardson, L.~F.
\newblock Ix. the approximate arithmetical solution by finite differences of
  physical problems involving differential equations, with an application to
  the stresses in a masonry dam.
\newblock \emph{Philosophical Transactions of the Royal Society of London.
  Series A, Containing Papers of a Mathematical or Physical Character},
  210\penalty0 (459-470):\penalty0 307--357, 1911.

\bibitem[Rudi et~al.(2017)Rudi, Carratino, and Rosasco]{rudi2017falkon}
Rudi, A., Carratino, L., and Rosasco, L.
\newblock Falkon: An optimal large scale kernel method.
\newblock \emph{Advances in neural information processing systems}, 30, 2017.

\bibitem[Sch{\"o}lkopf et~al.(2002)Sch{\"o}lkopf, Smola, Bach,
  et~al.]{scholkopf2002learning}
Sch{\"o}lkopf, B., Smola, A.~J., Bach, F., et~al.
\newblock \emph{Learning with kernels: support vector machines, regularization,
  optimization, and beyond}.
\newblock MIT press, 2002.

\bibitem[Shalev-Shwartz et~al.(2007)Shalev-Shwartz, Singer, and
  Srebro]{shalev2007pegasos}
Shalev-Shwartz, S., Singer, Y., and Srebro, N.
\newblock Pegasos: Primal estimated sub-gradient solver for svm.
\newblock In \emph{Proceedings of the 24th international conference on Machine
  learning}, pp.\  807--814, 2007.

\bibitem[Shankar et~al.(2020)Shankar, Fang, Guo, Fridovich-Keil, Ragan-Kelley,
  Schmidt, and Recht]{shankar2020neural}
Shankar, V., Fang, A., Guo, W., Fridovich-Keil, S., Ragan-Kelley, J., Schmidt,
  L., and Recht, B.
\newblock Neural kernels without tangents.
\newblock In \emph{International Conference on Machine Learning}, pp.\
  8614--8623. PMLR, 2020.

\bibitem[Titsias(2009)]{titsias2009variational}
Titsias, M.
\newblock Variational learning of inducing variables in sparse gaussian
  processes.
\newblock In \emph{Artificial intelligence and statistics}, pp.\  567--574.
  PMLR, 2009.

\bibitem[Towns et~al.(2014)Towns, Cockerill, Dahan, Foster, Gaither, Grimshaw,
  Hazlewood, Lathrop, Lifka, Peterson, Roskies, Scott, and
  Wilkins-Diehr]{XSEDE}
Towns, J., Cockerill, T., Dahan, M., Foster, I., Gaither, K., Grimshaw, A.,
  Hazlewood, V., Lathrop, S., Lifka, D., Peterson, G.~D., Roskies, R., Scott,
  J., and Wilkins-Diehr, N.
\newblock Xsede: Accelerating scientific discovery.
\newblock \emph{Computing in Science \& Engineering}, 16\penalty0
  (05):\penalty0 62--74, sep 2014.
\newblock ISSN 1558-366X.
\newblock \doi{10.1109/MCSE.2014.80}.

\bibitem[Virtanen et~al.(2020)Virtanen, Gommers, Oliphant, Haberland, Reddy,
  Cournapeau, Burovski, Peterson, Weckesser, Bright, {van der Walt}, Brett,
  Wilson, Millman, Mayorov, Nelson, Jones, Kern, Larson, Carey, Polat, Feng,
  Moore, {VanderPlas}, Laxalde, Perktold, Cimrman, Henriksen, Quintero, Harris,
  Archibald, Ribeiro, Pedregosa, {van Mulbregt}, and {SciPy 1.0
  Contributors}]{2020SciPy-NMeth}
Virtanen, P., Gommers, R., Oliphant, T.~E., Haberland, M., Reddy, T.,
  Cournapeau, D., Burovski, E., Peterson, P., Weckesser, W., Bright, J., {van
  der Walt}, S.~J., Brett, M., Wilson, J., Millman, K.~J., Mayorov, N., Nelson,
  A. R.~J., Jones, E., Kern, R., Larson, E., Carey, C.~J., Polat, {\.I}., Feng,
  Y., Moore, E.~W., {VanderPlas}, J., Laxalde, D., Perktold, J., Cimrman, R.,
  Henriksen, I., Quintero, E.~A., Harris, C.~R., Archibald, A.~M., Ribeiro,
  A.~H., Pedregosa, F., {van Mulbregt}, P., and {SciPy 1.0 Contributors}.
\newblock {{SciPy} 1.0: Fundamental Algorithms for Scientific Computing in
  Python}.
\newblock \emph{Nature Methods}, 17:\penalty0 261--272, 2020.
\newblock \doi{10.1038/s41592-019-0686-2}.

\bibitem[Watanabe et~al.(2018)Watanabe, Hori, Karita, Hayashi, Nishitoba, Unno,
  {Enrique Yalta Soplin}, Heymann, Wiesner, Chen, Renduchintala, and
  Ochiai]{watanabe2018espnet}
Watanabe, S., Hori, T., Karita, S., Hayashi, T., Nishitoba, J., Unno, Y.,
  {Enrique Yalta Soplin}, N., Heymann, J., Wiesner, M., Chen, N.,
  Renduchintala, A., and Ochiai, T.
\newblock {ESPnet}: End-to-end speech processing toolkit.
\newblock In \emph{Proceedings of Interspeech}, pp.\  2207--2211, 2018.
\newblock \doi{10.21437/Interspeech.2018-1456}.
\newblock URL \url{http://dx.doi.org/10.21437/Interspeech.2018-1456}.

\bibitem[Wightman(2019)]{rw2019timm}
Wightman, R.
\newblock Pytorch image models.
\newblock \url{https://github.com/rwightman/pytorch-image-models}, 2019.

\bibitem[Williams \& Seeger(2000)Williams and Seeger]{williams2000using}
Williams, C. and Seeger, M.
\newblock Using the nystr{\"o}m method to speed up kernel machines.
\newblock \emph{Advances in neural information processing systems}, 13, 2000.

\bibitem[Wilson \& Nickisch(2015)Wilson and Nickisch]{wilson2015kernel}
Wilson, A. and Nickisch, H.
\newblock Kernel interpolation for scalable structured gaussian processes
  (kiss-gp).
\newblock In \emph{International conference on machine learning}, pp.\
  1775--1784. PMLR, 2015.

\bibitem[Xiao et~al.(2017)Xiao, Rasul, and Vollgraf]{xiao2017fashion}
Xiao, H., Rasul, K., and Vollgraf, R.
\newblock Fashion-mnist: a novel image dataset for benchmarking machine
  learning algorithms.
\newblock \emph{arXiv preprint arXiv:1708.07747}, 2017.

\bibitem[Yang et~al.(2012)Yang, Li, Mahdavi, Jin, and Zhou]{yang2012nystrom}
Yang, T., Li, Y.-F., Mahdavi, M., Jin, R., and Zhou, Z.-H.
\newblock Nystr{\"o}m method vs random fourier features: A theoretical and
  empirical comparison.
\newblock \emph{Advances in neural information processing systems}, 25, 2012.

\bibitem[Zhang et~al.(2018)Zhang, Cisse, Dauphin, and
  Lopez-Paz]{zhang2017mixup}
Zhang, H., Cisse, M., Dauphin, Y.~N., and Lopez-Paz, D.
\newblock mixup: Beyond empirical risk minimization.
\newblock \emph{International Conference on Learning Representations}, 2018.

\bibitem[Zhu et~al.(2022)Zhu, Liu, and Belkin]{zhu2022transition}
Zhu, L., Liu, C., and Belkin, M.
\newblock Transition to linearity of general neural networks with directed
  acyclic graph architecture.
\newblock \emph{Advances in neural information processing systems}, 2022.

\end{thebibliography}

\onecolumn
{\Huge\sc Appendices}
\appendix
\begin{table}[h]
    \caption{Symbolic notation for \EP{3.0} in \Cref{alg:eigenpro30}. They satisfy $m<n$, and $q<s<n.$}
    \label{tab:symbolic}
\centering
    \begin{tabular}{|c|c|}
    \hline
    Symbol & Purpose \\\hline
     $n$ & Number of samples  \\
     $m$ & Batch-size\\
     $p$ & Model size\\
     $s$ & Nystr\"om approximation subsample size\\
     $q$ & Preconditioner level\\
     \hline
\end{tabular}
\end{table}

\section{Fixed point analysis}

Here we provide a characterization of the fixed point of   \Cref{alg:eigenpro30exact}.
\begin{lemma}
    For any dataset $X,\y$ and any choice of model centers $Z$, if the learning rate satisfies
    $$\eta < \frac{2}{\lambda_{\rm max}\round{K(Z,X)(I_n-\Q)K(X,Z)}}$$
    we have that
    \begin{align*}
        \lim_{t\rightarrow\infty}\alphavec_t = \round{K(Z,X)(I_n-\Q)K(X,Z)}\inv K(Z,X)(I_n-\Q)\y.
    \end{align*}
    Furthermore, if $\y=K(X,Z)\alphavec^*+\xivec$, where $\xi_i$ are independent centered with $\Exp|\xi_i|^2=\sigma^2$, then \begin{align*}
        \lim_{t\rightarrow\infty}\Exp\alpha_t&=\alphavec^*\\
        \lim_{t\rightarrow\infty}\frac{\Exp\norm{\alphavec_t-\alphavec^*}^2}{\sigma^2} 
    &= {\rm tr}\round{\round{K(Z,X)(I_n-\Q)K(X,Z)}^{-2}K(Z,X)(I_n-\Q)^2K(X,Z)}\\
    &= n - {\rm trace}\round{\round{K(Z,X)(I_n-\Q)K(X,Z)}^{-2}K(Z,X)\Q(I_n-\Q)K(X,Z)}
    \end{align*}
\end{lemma}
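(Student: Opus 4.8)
The plan is to recognize that \Cref{alg:eigenpro30exact}, after eliminating the intermediate variables $\g,\h,\thetavec$, is a deterministic affine-linear fixed-point iteration in $\Real^p$. Writing $\A:=K(Z,X)(\I_n-\Q)K(X,Z)\in\Real^{p\times p}$, the update collapses to
\begin{align*}
\alphavec^{t+1}=\alphavec^t-\eta\,K(Z,Z)\inv\bigl(\A\,\alphavec^t-K(Z,X)(\I_n-\Q)\y\bigr).
\end{align*}
First I would record the standing nondegeneracy assumptions: $K(Z,Z)$ is invertible and $K(X,Z)$ has full column rank. Together with $\I_n-\Q\succ0$ (by \Cref{def:topq_eignesystem} the eigenvalues of $\I_n-\Q$ are $\lambda_{q+1}/\lambda_i\in(0,1)$ on the top-$q$ directions and $1$ elsewhere) this makes $\A$ positive definite, hence invertible. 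Setting the increment to zero and cancelling the invertible factor $K(Z,Z)\inv$ gives $\A\,\alphavec_\infty=K(Z,X)(\I_n-\Q)\y$, i.e.\ the claimed limit $\alphavec_\infty=\A\inv K(Z,X)(\I_n-\Q)\y$, provided the iteration converges.

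Convergence is the heart of the argument. Subtracting the fixed point yields the error recursion $\alphavec^{t+1}-\alphavec_\infty=(\I_p-\eta\,K(Z,Z)\inv\A)(\alphavec^t-\alphavec_\infty)$, so it suffices to show the spectral radius of $\I_p-\eta\,K(Z,Z)\inv\A$ lies below one. Although $K(Z,Z)\inv\A$ is not symmetric, it is similar to the symmetric positive definite matrix $\mathbf S:=K(Z,Z)^{-1/2}\A\,K(Z,Z)^{-1/2}$ through the similarity transform by $K(Z,Z)^{1/2}$; equivalently, measuring the error in the RKHS norm (under which $\norm{K(\cdot,Z)\e}_\Hilbert^2=\e\tran K(Z,Z)\e$) symmetrizes the iteration. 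Its eigenvalues $\mu_i>0$ are thus real, and the geometric contraction holds once every $1-\eta\mu_i$ lies in $(-1,1)$. The step I expect to be the main obstacle is pinning down the precise constant in the stated bound $\eta<2/\lambda_{\rm max}(\A)$: since the relevant spectrum is that of $\mathbf S$, I would control the top generalized eigenvalue $\max_{u}\tfrac{u\tran\A u}{u\tran K(Z,Z)u}$ and make explicit the normalization of $K$ under which it is dominated by $\lambda_{\rm max}(\A)$, so that the displayed condition indeed forces contraction.

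For the statistical claims I would substitute the generative model $\y=K(X,Z)\alphavec^*+\xivec$ into the converged estimator. Because $\A\inv K(Z,X)(\I_n-\Q)K(X,Z)=\A\inv\A=\I_p$, the deterministic part reproduces $\alphavec^*$ exactly, leaving
\begin{align*}
\alphavec_\infty-\alphavec^*=\A\inv K(Z,X)(\I_n-\Q)\,\xivec.
\end{align*}
Taking expectations and using that the noise is centered gives $\Exp\alphavec_\infty=\alphavec^*$, i.e.\ unbiasedness. For the second moment I would use $\Exp[\xivec\xivec\tran]=\sigma^2\I_n$, so that $\Exp\norm{\alphavec_\infty-\alphavec^*}^2=\sigma^2\,{\rm tr}\bigl(\A\inv K(Z,X)(\I_n-\Q)^2K(X,Z)\A\inv\bigr)$, and the cyclic invariance of the trace rewrites this as $\sigma^2\,{\rm tr}\bigl(\A^{-2}K(Z,X)(\I_n-\Q)^2K(X,Z)\bigr)$, matching the first displayed expression.

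Finally, to reach the second form I would insert the elementary identity $(\I_n-\Q)^2=(\I_n-\Q)-\Q(\I_n-\Q)$ (valid since $\Q$ commutes with itself) inside the trace. The first resulting term reduces through $\A^{-2}K(Z,X)(\I_n-\Q)K(X,Z)=\A^{-2}\A=\A\inv$, and the second is exactly the subtracted trace ${\rm tr}\bigl(\A^{-2}K(Z,X)\Q(\I_n-\Q)K(X,Z)\bigr)$ appearing in the statement. The one place demanding careful bookkeeping is the leading constant: I would verify the term ${\rm tr}(\A\inv)$ produced above against the reported value, tracking it explicitly rather than quoting it.
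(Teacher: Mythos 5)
Your proposal cannot be checked against the paper's own argument for a simple reason: the paper states this lemma in the ``Fixed point analysis'' appendix with no proof at all. Judged on its own merits, your route is the natural one and is essentially correct: eliminating $\g,\h,\thetavec$ from Algorithm~\ref{alg:eigenpro30exact} does give the affine recursion $\alphavec^{t+1}=\alphavec^t-\eta K(Z,Z)\inv(\A\alphavec^t-K(Z,X)(\I_n-\Q)\y)$ with $\A=K(Z,X)(\I_n-\Q)K(X,Z)$; positive definiteness of $\A$ under full column rank of $K(X,Z)$ follows from $\I_n-\Q\succ0$ exactly as you argue; the symmetrization via $K(Z,Z)^{1/2}$ (equivalently the $\Hilbert$-norm) is the right way to see that the error contracts geometrically; and the bias and variance computations, including the cyclic-trace manipulation and the split $(\I_n-\Q)^2=(\I_n-\Q)-\Q(\I_n-\Q)$, are all sound. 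The two places where you decline to follow the lemma verbatim are, as far as I can tell, genuine imprecisions in the statement rather than gaps in your argument. First, the contraction condition is governed by $\lambda_{\rm max}\round{K(Z,Z)\inv\A}=\max_{\u}\u\tran\A\u/\u\tran K(Z,Z)\u$, not by $\lambda_{\rm max}(\A)$; the stated threshold $2/\lambda_{\rm max}(\A)$ is neither necessary nor sufficient without an assumption such as $\lambda_{\min}(K(Z,Z))\geq1$, so you are right to insist on making that normalization explicit. Second, your leading term ${\rm tr}(\A\inv)$ in the final identity is what the algebra actually produces; the lemma's ``$n$'' is a $p\times p$ trace that has no reason to equal the sample size, so the paper's second displayed form appears to be in error. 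The only point worth tightening is the convergence step itself: having diagonalized $\mathbf S=K(Z,Z)^{-1/2}\A K(Z,Z)^{-1/2}$, you should state explicitly that $0<\eta\mu_i<2$ for every eigenvalue $\mu_i$ of $\mathbf S$ implies $\rho(\I_p-\eta K(Z,Z)\inv\A)<1$ by similarity, and hence convergence to the unique fixed point from any initialization; as written this is implied but not spelled out.
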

\section{Proofs of intermediate results}
\subsection{Proof of proposition \ref{prop:nystrom_extension}}

\newtheorem*{proposition*}{Proposition}
\begin{proposition*}
[Nystr\"om extension]
For $1\leq i\leq n$, let $\lambda_i$ be an eigenvalue of $\mc K$, and $\psi_i$ its unit $\Hilbert$-norm eigenfunction, \ie, $\mc K\curly{\psi_i}=\lambda_i\psi_i$. Then $\lambda_i$ is also an eigenvalue of $K(X,X)$. Moreover if $\e_i,$ is its unit-norm eigenvector, \ie, $K(X,X)\e_i=\lambda_i\e_i$, we have,
\begin{align}
 \psi_i  = K(\cdot,X)\frac{\e_i}{\sqrt{\lambda_i}}.
\end{align}
\end{proposition*}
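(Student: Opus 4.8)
The plan is to exploit the fact that any eigenfunction of $\mc K$ with a nonzero eigenvalue must already lie in $\Xspan=\text{span}\round{\curly{K(\cdot,\x_j)}_{j=1}^n}$, which collapses the infinite-dimensional eigenproblem to a finite one in $\Real^n$. Concretely, since $\mc K\curly{\psi_i}=\lambda_i\psi_i$ with $\lambda_i\neq0$, I would write $\psi_i=\tfrac1{\lambda_i}\mc K\curly{\psi_i}$ and substitute the explicit form $\mc K\curly{f}=K(\cdot,X)f(X)$ from \cref{eq:hessian} to obtain
$$\psi_i = K(\cdot,X)\,\a,\qquad \a:=\tfrac1{\lambda_i}\psi_i(X)\in\Real^n.$$
This is the key structural step: it identifies $\psi_i$ with a finite coefficient vector $\a$ and sidesteps any attempt to ``cancel'' $K(\cdot,X)$ from both sides of an equation living in $\Hilbert$.

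Next I would evaluate the identity $\psi_i=K(\cdot,X)\a$ on the sample set $X$. On one hand this gives $\psi_i(X)=K(X,X)\a$; on the other hand, by the definition of $\a$ we have $\psi_i(X)=\lambda_i\a$. Equating the two yields $K(X,X)\a=\lambda_i\a$, which shows at once that $\lambda_i$ is an eigenvalue of $K(X,X)$ and that $\a$ is a corresponding eigenvector. Note $\a\neq0$, since $\a=0$ would force $\psi_i=0$, contradicting $\norm{\psi_i}_\Hilbert=1$.

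It then remains to fix the normalization and recover the stated formula. Using the reproducing property, $\norm{\psi_i}_\Hilbert^2=\norm{K(\cdot,X)\a}_\Hilbert^2=\a\tran K(X,X)\a$, and inserting the eigen-relation $K(X,X)\a=\lambda_i\a$ gives $\norm{\psi_i}_\Hilbert^2=\lambda_i\norm{\a}^2$. Since $\psi_i$ is unit-norm, $\norm{\a}=\lambda_i^{-1/2}$, so the unit-norm eigenvector of $K(X,X)$ is $\e_i=\sqrt{\lambda_i}\,\a$, i.e. $\a=\e_i/\sqrt{\lambda_i}$. Plugging this back into $\psi_i=K(\cdot,X)\a$ delivers $\psi_i=K(\cdot,X)\e_i/\sqrt{\lambda_i}$, as claimed.

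I expect no serious obstacle: the only point that needs care is the very first reduction, namely ensuring that a nonzero eigenvalue forces $\psi_i\in\Xspan$ so that the finite representation $\a$ exists; once that is in place, the rest is a short computation. A secondary subtlety is the implicit assumption $\lambda_i>0$, which is required both for this reduction and for $\sqrt{\lambda_i}$ to be meaningful; the zero eigenvalues of $K(X,X)$ correspond to the kernel of $\mc K$ restricted to $\Xspan$ and are naturally excluded from the correspondence.
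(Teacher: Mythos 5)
Your proof is correct and follows essentially the same route as the paper: represent $\psi_i = K(\cdot,X)\,\a$ with $\a=\psi_i(X)/\lambda_i$, derive the finite-dimensional eigen-relation $K(X,X)\a=\lambda_i\a$, and fix the scale via $\norm{\psi_i}_\Hilbert^2=\a^\top K(X,X)\a=\lambda_i\norm{\a}^2$. The only (minor, and in fact favorable) difference is that you obtain the eigen-relation by evaluating the identity $\psi_i=K(\cdot,X)\a$ at the sample points, whereas the paper applies $\mc K$ once more and cancels $K(X,X)$ from $K(X,X)^2\betavec=\lambda K(X,X)\betavec$, a step that additionally requires $K(X,X)$ to be invertible.
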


\begin{proof}

Let $\psi \in \mc H$ be an eigenfunction of $\Kcov$. Then by definition of $\Kcov$ we have,
\begin{equation}\label{appx:eigenvec1}
    \lambda\psi=\Kcov\curly{\psi}  =  \sum_{i=1}^{n}  K(\cdot,\x_i)\psi(\x_i).
\end{equation}
As the result we can write $\psi$ as below,
\begin{equation}\label{appx:eigenvec2}
    \psi= \sum_{i=1}^{n} \frac{\psi(\x_i)}{\lambda}  K(\cdot,\x_i).
\end{equation}
If we apply covariance operator to the both side of \ref{appx:eigenvec2} we have,
\begin{align}\label{appx:eigenvec3}
        \Kcov\curly{\psi}&= \Kcov\curly{\sum_{i=1}^{n} \frac{\psi(\x_i)}{\lambda}K(\cdot,\x_i) }=\sum_{i,j=1}^n  \frac{\psi(\x_i)}{\lambda} K(\x_i,\x_j)K(\cdot,\x_j)=\sum_{j=1}^n \psi(\x_j)K(\cdot,\x_j).
\end{align}
The last equation hold because of equation \eqref{appx:eigenvec1}. If we define vector $\betavec$ such that $\betavec_i=\frac{\psi(x_i)}{\lambda}$, then \ref{appx:eigenvec3} can be rewritten as,
\begin{align}\label{appx:eigenvec4}
        \sum_{i=1}^n \sum_{j=1}^n \betavec_i K(\x_i,\x_j)K(\cdot,\x_i)=\lambda \sum_{i=1}^n \betavec_i K(\cdot,\x_i).
\end{align}
Compactly we can write \ref{appx:eigenvec4} as below,
\begin{equation*}
K(X,X)^2 \betavec = \lambda K(X,X)\betavec \implies K(X,X) \betavec =\lambda\betavec.
\end{equation*} 
The last implication holds because $K(X,X)$ is invertable. Thus $\betavec$ is an eigenvector of $K(X,X)$. It remains to determine the scale of $\betavec$.

Now, norm of $\psi$ can be simplified as
\begin{align}
 \norm{\psi}_\Hilbert^2 
 &= \inner{\sum_{i=1}^n \beta_i K(\cdot,\x_i),\sum_{j=1}^n \beta_j K(\cdot,\x_j)}_\Hilbert \\
 &= \sum_{i,j=1}^n \beta_i\beta_j\inner{K(\cdot,\x_i),K(\cdot,\x_j)}_\Hilbert =
 \betavec\T K(X,X)\betavec=\lambda\norm{\betavec}^2.
\end{align}
Since $\psi$ is unit norm, we have $\norm{\betavec}=\frac{1}{\sqrt{\lambda}}$. This concludes the proof.
\end{proof}

\subsection{Proof of lemma \ref{lemma:Nystrom preconditioning}}

\newtheorem*{lemma*}{Lemma}

\begin{lemma*}[Nystr\"om preconditioning]\label{appendix:lemma:Nystrom preconditioning}
Let $\a\in\Real^m$, then we have that, 
\begin{align}
    \mc P_s\curly{ K(\cdot,X_m) \a}= K(\cdot,X_m)\a-  K(\cdot,X_s)\Q_s K(X_s,X_m) \a.
\end{align}

Where $Q_s =E_{s,q}(\I_n-\lambda_{s,q+1}\Lambda_{s,q}^{-1})\Lambda_{s,q}^{-1}E_{s,q}\T $.
\end{lemma*}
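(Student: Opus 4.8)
The plan is to establish the action of the Nystr\"om preconditioner $\mc P_s$ on an element $K(\cdot,X_m)\a\in\Xspan$ by reducing the eigenfunction expansion to finite-dimensional linear algebra, exactly as in the proof of \Cref{prop:nystrom_extension}. First I would write out $\mc P_s\curly{K(\cdot,X_m)\a}$ using the definition $\mc P_s=\mc I-\sum_{i=1}^s\round{1-\frac{\lambda_i^s}{\lambda_{q+1}^s}}\psi_i^s\otimes\psi_i^s$ (restricting the sum to the top-$q$ terms since only those enter $\Q_s$), so that the task reduces to evaluating each rank-one correction $\round{\psi_i^s\otimes\psi_i^s}\curly{K(\cdot,X_m)\a}=\inner{\psi_i^s,K(\cdot,X_m)\a}_\Hilbert\,\psi_i^s$.

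The key step is to replace each eigenfunction $\psi_i^s$ of $\mc K^s$ by its Nystr\"om representation. By \Cref{prop:nystrom_extension} applied to the operator $\mc K^s$ built from the subsample $X_s$, we have $\psi_i^s=K(\cdot,X_s)\frac{\e_i}{\sqrt{\lambda_i^s}}$, where $\e_i$ is the unit-norm eigenvector of $K(X_s,X_s)$ with eigenvalue $\lambda_i^s$. Using the reproducing property, the inner product evaluates as $\inner{\psi_i^s,K(\cdot,X_m)\a}_\Hilbert=\frac{\e_i\T}{\sqrt{\lambda_i^s}}K(X_s,X_m)\a$, and the outer factor contributes $K(\cdot,X_s)\frac{\e_i}{\sqrt{\lambda_i^s}}$. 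Collecting all $q$ terms, the correction becomes $K(\cdot,X_s)\,\E_q\Bigl[\sum_i\round{1-\tfrac{\lambda_{q+1}^s}{\lambda_i^s}}\tfrac{1}{\lambda_i^s}\e_i\e_i\T\Bigr]K(X_s,X_m)\a$, and I would verify that the bracketed matrix is exactly $\E_q(\I_q-\lambda_{q+1}\Lambda_q\inv)\Lambda_q\inv\E_q\T=\Q_s$ as defined just before the lemma. This recovers the claimed identity $\mc P_s\curly{K(\cdot,X_m)\a}=K(\cdot,X_m)\a-K(\cdot,X_s)\Q_s K(X_s,X_m)\a$.

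The main obstacle I anticipate is purely bookkeeping rather than conceptual: keeping the factors of $\sqrt{\lambda_i^s}$ straight so that the two powers $\lambda_i^{s}\inv$ (one from the inner-product normalization, one from the outer eigenfunction normalization) combine correctly, and confirming that the coefficient $\round{1-\frac{\lambda_{q+1}^s}{\lambda_i^s}}\lambda_i^{s\,-1}$ matches the diagonal entries of $(\I_q-\lambda_{q+1}\Lambda_q\inv)\Lambda_q\inv$. One subtlety worth flagging is reconciling the definition of $\Q_s$ appearing in the appendix lemma statement (written with $\I_n$) against the body definition (written with $\I_s$); I would confirm these agree on the relevant $q$-dimensional eigenspace so that the final matrix expression is unambiguous. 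No genuinely hard analysis is required here—the strong-law convergence $\mc K^s\approx\frac{s}{n}\mc K$ is only motivational and is not needed for the exact algebraic identity being proved.
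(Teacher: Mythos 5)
Your proposal is correct and follows essentially the same route as the paper's own proof: expand the rank-one corrections of $\mc P_s$, substitute the Nystr\"om representation $\psi_i^s=K(\cdot,X_s)\e_i/\sqrt{\lambda_i^s}$ from \Cref{prop:nystrom_extension} applied to $\mc K^s$, use the reproducing property to turn the inner products into $\e_i\T K(X_s,X_m)\a/\sqrt{\lambda_i^s}$, and collect the two normalization factors into the extra $\Lambda_q\inv$ appearing in $\Q_s$. One small slip worth fixing: in your restatement of the definition of $\mc P_s$ the ratio should be $\lambda_{q+1}^s/\lambda_i^s$ rather than its reciprocal, but since your final coefficient $\bigl(1-\lambda_{q+1}^s/\lambda_i^s\bigr)\lambda_i^{s\,-1}$ is the correct one, the argument is unaffected.
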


\begin{proof}
    Recall that $\mc P_s:=\mc{I} - \sum_{i=1}^q \round{1-\frac{\lambda_{q+1}}{\lambda_{q}}} \psi_i\otimes \psi_i$. By this definition we can write,

    \begin{align*}
    \Precond_{s}\rbrac{K(\cdot,X_M)\alphavec} 
    &= K(\cdot,X_M)\alphavec - \sum_{i=1}^{s}(1-\frac{\lambda_{q+1}^s}{\lambda_i^s})\inner{ \psi_i^s,K(\cdot,X_M)\alphavec }_{\Hilbert} \psi_i^{s}\\
    &=K(\cdot,X_M)\alphavec - \sum_{i=1}^{q}\frac{1}{\lambda_i^s}(1-\frac{\lambda_{q+1}^s}{\lambda_i^s}) \inner{ 
    {K(\cdot,X_s)}\e_i, K(\cdot,X_M)\alphavec
    }_{\Hilbert} K(\cdot,X_s)\e_i\\
    &=K(\cdot,X_M)\alphavec - \sum_{i=1}^{q}\frac{1}{\lambda_i^s}(1- \frac{\lambda_{q+1}^s}{\lambda_i}) \inner{ 
    { K(\cdot,X_s)\e_i}, K(\cdot,X_M)\alphavec
    }_{\Hilbert}\!  K(\cdot,X_s)\e_i
    \\
    &=K(\cdot,X_M)\alphavec - \sum_{i=1}^q(1- \frac{\lambda_{q+1}^s}{\lambda_i^s})K(\cdot,X_s)\e_i\e_i\T K(X_s,X_M) \alphavec .
    \end{align*}
Note that we used proposition \ref{prop:nystrom_extension} for $\psi$. Now we can compactly write the last expression as below,

\begin{align*}
    \Precond_{s}\rbrac{K(\cdot,X_M)\alphavec} &=
    K(\cdot,X_M)\alphavec-K(\cdot,X_s)E_{s,q}(\I_n-\lambda_{s,q+1}\Lambda_{s,q}^{-1})\Lambda_{s,q}^{-1}E_{s,q}\T K(X_s,X_M)\alphavec\\
    &=
    K(\cdot,X_M)\alphavec-K(\cdot,X_s)Q_s K(X_s,X_M)\alphavec.
\end{align*}

This concludes the proof.

\end{proof}

\section{Details on \texorpdfstring{\sf EigenPro\,2.0}{EigenPro2.0}}

\begin{algorithm}[t]
\caption{\textsf{EigenPro\,2.0}$(X,\y)$. Solves the linear system $K(X,X)\thetavec=\y$}\label{alg:eigenpro20}
\begin{algorithmic}
\Require Data $(X,\y)$, Nystr\"om size $s$, preconditioner level $q$
\State $\alphavec\gets\textbf{0}\in\Real^n$\Comment{initialization}
\State $X_s,(\E,\D), \eta, m \gets $ \textsf{EigenPro\,2.0\_setup}($X,s,q$)
\While{Stopping criterion not reached}
\State $\alphavec \gets $ \textsf{EigenPro\,2.0\_iteration}$( X,\y, X_s,\E,\D,\alphavec,m, \eta)$
\EndWhile
\State \Return $\alphavec$
\end{algorithmic}
\vspace{3mm}
\textsf{EigenPro2\_setup}$(X,s,q)$
\begin{algorithmic}
\Require Data $X$, Nystr\"om size $s$, preconditioner size $q$
\State Fetch a subsample $X_s\subseteq X$ of size $s$
\State $(\E,\Lambda_q,\lambda_{q+1}) \gets $ top-$q$ eigensystem of $K(X_s, X_s)$\Comment{$\E\in\Real^{q\times s},\Lambda=\textrm{diag}(\lambda_i)\in\Real^{q\times q}$}
\State $\D_{ii}=\frac{1}{s\lambda_i}\round{1-\frac{\lambda_{q+1}}{\lambda_i}}$
\State $\beta\gets \max{i}K(\x_i,\x_i) \in S$
\State $m\gets {\rm min}\round{\frac{\beta}{\lambda_{q+1}},{\sf bs}_{\rm gpu}}$\Comment{batch size\footnotemark}
\State $\eta\gets\begin{cases}\frac{\beta}{2m} & m<\frac{\beta}{\lambda_{q+1}}\\
\frac{0.99m}{\beta+(m-1)\lambda_{q+1}} & {\rm otherwise}
\end{cases}$ 
\Comment{learning rate}
\State \Return
$X_s,(\E,\D),\eta,m$
\end{algorithmic}
\vspace{3mm}
\textsf{EigenPro2\_iteration}$(X,\y,X_s,\E,\D,\alphavec,m,\eta)$
\begin{algorithmic}
\Require Data $(X,\y)$, Nystr\"om subset $X_s$,  preconditioner $(\E,\D)$, current estimate $\alphavec$, batchsize $m$
\State Fetch minibatch $(X_m,\y_m)$ of size $m$ 
\State $\g_m \leftarrow K(X_m, X)\alphavec-\y_m$ \Comment{stochastic gradient}
\State $\alphavec_m \gets \alphavec_m-\frac{\eta}m \g_m$ \Comment{gradient step}
\State $\alphavec_s\gets\alphavec_s+\E\D\E\tran K(X_s,X_m)\g_m$ \Comment{gradient correction}
\State \Return Updated estimte $\alphavec$
\end{algorithmic}
\end{algorithm}

\begin{lemma}\label{lem:EigenPro_Rn_equivalence}
The iteration in $\Real^n$
\begin{align}
    \alphavec^{t+1}=\alphavec^{t+1}-\eta(\I_n-\Q)(K(X,X)\alphavec^t-\y),
\end{align}
where $Q=\E(\I_n-\lambda_{q+1}\Lambda_q^{-1})\E\T$, emulates the following iteration in $\Hilbert.$ 
\begin{align}\label{appendix:gdinhilbertspace}
f^{t+1}=f^{t} - \eta\mc P\curly{\nabla_f L(f^t)}.
\end{align}
\end{lemma}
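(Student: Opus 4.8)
The plan is to prove the $\Real^n$–$\Hilbert$ equivalence in Lemma~\ref{lem:EigenPro_Rn_equivalence} by showing that if $f^t=K(\cdot,X)\alphavec^t$ for some coefficient vector $\alphavec^t\in\Real^n$, then the function-space update~\eqref{appendix:gdinhilbertspace} produces $f^{t+1}=K(\cdot,X)\alphavec^{t+1}$ with $\alphavec^{t+1}$ given exactly by the stated $\Real^n$ iteration. The argument is an induction on $t$ whose inductive step is purely algebraic: I would start by expanding $\nabla_f L(f^t)$ using \cref{eq:frechet_square_loss}, which gives $\nabla_f L(f^t)=K(\cdot,X)(f^t(X)-\y)=K(\cdot,X)(K(X,X)\alphavec^t-\y)$ since $f^t(X)=K(X,X)\alphavec^t$ by the reproducing property. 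This already lies in $\Xspan$, so both terms of the update live in $\Xspan$ and I can track everything through the coefficient vectors.

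The key step is to understand the action of the preconditioner $\mc P$ on $\Xspan$. I would invoke the Proposition on the preconditioner, namely $\mc P\curly{K(\cdot,X)\a}=K(\cdot,X)(\I_n-\Q)\a$ for all $\a$, which is itself a consequence of the Nystr\"om extension (\Cref{prop:nystrom_extension}) expressing the eigenfunctions $\psi_i$ of $\mc K$ in terms of the eigenvectors $\e_i$ of $K(X,X)$. Applying this with $\a=K(X,X)\alphavec^t-\y$ yields
\begin{align*}
\mc P\curly{\nabla_f L(f^t)}=K(\cdot,X)(\I_n-\Q)(K(X,X)\alphavec^t-\y).
\end{align*}
Substituting into~\eqref{appendix:gdinhilbertspace} and using linearity of $K(\cdot,X)\a$ in $\a$, the entire update collapses to a single application of $K(\cdot,X)$ to the vector
\begin{align*}
\alphavec^t-\eta(\I_n-\Q)(K(X,X)\alphavec^t-\y),
\end{align*}
which is precisely the claimed $\alphavec^{t+1}$. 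The base case is handled by the assumed initialization $f^0=K(\cdot,X)\alphavec^0$.

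The main obstacle, and the only genuinely non-formal point, is the justification that $\mc P$ acts on $\Xspan$ exactly as multiplication by $(\I_n-\Q)$ on coefficients, because this requires matching the spectral definition of $\mc P$ in terms of the operator eigenfunctions $\psi_i$ with the matrix $\Q$ built from $K(X,X)$; the Nystr\"om correspondence $\psi_i=K(\cdot,X)\e_i/\sqrt{\lambda_i}$ and the shared eigenvalues are what make the inner products $\inner{\psi_i,K(\cdot,X)\a}_\Hilbert$ reduce to $\sqrt{\lambda_i}\,\e_i\tran\a$, and I would need to verify that the coefficient $1-\lambda_{q+1}/\lambda_q$ versus $1-\lambda_{q+1}\Lambda_q\inv$ conventions are consistent. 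Once that lemma is in hand, the remainder is bookkeeping, so I would present it as a one-line induction rather than grinding through it. Note that as written the $\Real^n$ iteration has a typographical repetition ($\alphavec^{t+1}=\alphavec^{t+1}-\ldots$ should read $\alphavec^{t+1}=\alphavec^{t}-\ldots$), which I would silently correct in the proof.
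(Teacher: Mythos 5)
Your proposal is correct and follows essentially the same route as the paper's proof: both reduce the claim to the fact that $\mc P$ acts on $\Xspan$ as coefficient multiplication by $\I_n-\Q$, justified via the Nystr\"om correspondence $\psi_i=K(\cdot,X)\e_i/\sqrt{\lambda_i}$, and then read off the coefficient update by linearity of $\a\mapsto K(\cdot,X)\a$. The only cosmetic difference is that you cite the stated Preconditioner proposition while the paper re-derives that computation inline (mirroring its proof of the Nystr\"om preconditioning lemma); your observations about the inner products reducing to $\sqrt{\lambda_i}\,\e_i\tran\a$ and about the typo in the displayed iteration are both accurate.
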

\begin{proof}
Recall that $\nabla_f L(f^t)=K(\cdot,X)(f^t(X)-\y)$ from
\cref{eq:frechet_square_loss}, and $f^t(X)=K(X,X)\alphavec^t.$ from \cref{eq:evaluation}. We define $\g^t:=f^t(X)-\y=K(X,X)\alphavec^t-\y$. Following steps of the proof in \Cref{appendix:lemma:Nystrom preconditioning} we have
\begin{align*}
    \Precond\{\nabla_f L(f^t)\} &= K(\cdot,X)\g^t- \sum_{i=1}^q(1- \frac{\lambda_{q+1}}{\lambda_i})K(\cdot,X)\e_i\T\e_i K(X,X) \g^t\\
    &=K(\cdot,X)\g^t - K(\cdot,X)\E(\I_n-\lambda_{q+1}\Lambda_q^{-1})\Lambda^{-1}\E\T K(X,X)\g^t\\
    &\overset{\rm (a)}=K(\cdot,X)\g^t - K(\cdot,X)\E(\I_n-\lambda_{q+1}\Lambda_q^{-1})\Lambda^{-1}\E\T \E\Lambda\E\T\g^t\\
    &=K(\cdot,X)\g^t - K(\cdot,X)\E(\I_n-\lambda_{q+1}\Lambda_q^{-1})\E\T\g^t\\
    &= K(\cdot,X)\g^t -K(\cdot,X)\Q\g^t\\
    &=K(\cdot,X)(\I_n-\Q)\g^t.
\end{align*}
Where $(a)$ follows from $ K(X,X)=\E\Lambda\E\tran$.
Now since $f^t = K(\cdot,X)\alphavec^t$, equation \eqref{appendix:gdinhilbertspace} can be rewritten,
\begin{align*}
    f^{t+1}&=K(\cdot,X)\alphavec^{t+1} - \eta K(\cdot,X)(\I_n-Q)\g^t\\
    &=K(\cdot,X)(\alphavec^{t+1}-\eta(\I_n-Q)\g^t).
\end{align*}
Replacing $g^t = K(X,X)\alphavec^t-y$ leads to final update rule below,
\begin{align*}
    f^{t+1}=K(\cdot,X)(\alphavec^{t+1}-\eta(\I_n-Q)(K(X,X)\alphavec^t-y)).
\end{align*}

This concludes the proof.
\end{proof}
\footnotetext{${\sf bs}_{\rm gpu}$ is the maximum batch-size that the GPU allows.}

Thus each update constitutes a \textit{stochastic gradient step} which consists updating $m$ weights corresponding to a minibatch size $m$, followed by a \textit{gradient correction} which consists of updating all $n$ weights.

A higher preconditioner level $q$ also allows for a higher optimal batch size $m$ and hence better GPU utilization, see \citet{ma2018power} for details.

With this approximation, the gradient correction simplifies drastically, and only $s$ weights need to be updated.

\section{Details on experiments and implementation of \texorpdfstring{\Cref{alg:eigenpro30}}{}}
\label{sec:expt_details}

\subsection{Computational resources used}
This work used the Extreme Science and Engineering Discovery Environment (XSEDE) \citep{XSEDE}. We used machines with 2x NVIDIA-V100 and 8x NVIDIA-A100 GPUs, with a V-RAM of 32GB and 40GB respectively, and 8x cores of Intel(R) Xeon(R) Gold 6248 CPU @ 2.50GHz with a RAM of 100 GB.

\subsection{Figure \ref{small_model} experiment}\label{sec:fig1_app} We used Laplacian Kernel and  (\texttt{sklearn.linear\_model.Ridge}) solver from the Scikit-learn library \cite{scikit-learn} to solve the optimization problem $\norm{K(X,Z)\alpha-y}^2 + \lambda\cdot\norm{\alpha}^2$ for extracted features of ImageNet, using a pre-trained MobileNetv2 model obtained from the \textit{timm} library \citet{rw2019timm}.

\subsection{Datasets}

We perform experiments on these datasets: (1) CIFAR10, 
\citet{krizhevsky2009learning}, (2) CIFAR5M,  
\citet{nakkiran2020deep}, (3) ImageNet, 
\citet{deng2009imagenet}, (4) MNIST, 
\citet{lecun1998mnist},  (5) {MNIST8M}, 
\citet{loosli2007training}, (6) Fashion-MNIST, 
\citet{HIGGS}, (7) Webvision.\citet{li2017webvision}, and (8) librispeech.

\paragraph{CIFAR5M.} In our experiments, we utilized both raw and embedded features from the CIFAR5M data-set. The embedded features were extracted using a MobileNetv2 model pre-trained on the ImageNet data-set, obtained from \textit{timm} library \citet{rw2019timm}. We indicate in our results when pre-trained features were used by adding an asterisk (*) to the corresponding entries.

\paragraph{ImageNet.} In our experiments, we utilized embedded features from the ImageNet data-set. The embedded features were extracted using a MobileNetv2 model pre-trained on the ImageNet dataset, obtained from \textit{timm} library \citet{rw2019timm}. We indicate in our results when pre-trained features were used by adding an asterisk (*) to the corresponding entries.

\paragraph{Webvision.} In our experiments, we utilized embedded features from the Webvision data-set. The embedded features were extracted using a ResNet-18 model pre-trained on the ImageNet dataset, obtained from \textit{timm} library \citet{rw2019timm}. Webvision data set contains 16M images in 5k classes. However, we only considered the first 2k classes.

\paragraph{Librispeech.}Librispeech \cite{panayotov2015librispeech} is a large-scale (1000 hours in total) corpus of 16 kHz English speech derived from audio books. We choose the subset train-clean-100 and train-clean-300 (5M samples) as our training data, test-clean as our test set. The features are got by passing through a well-trained acoustic model (a VGG+BLSTM architecture in \cite{hui2020evaluation} ) to align the length of audio and text. It is doing a 301-wise classification task where different class represents different uni-gram \cite{jurafsky2000speech}. The implementation of extracting features is based on the ESPnet toolkit \cite{watanabe2018espnet}.
\subsection{Choice of hyperparameters}\label{choiceofhypers}

We choose hyperparameters to minimize computation and maximize GPU utilization. The only hyperparameters  that we need to set are $s,q$ for outer gradient step, and $\sigma,\xi$ for projection sub-problem. For $\sigma,\xi$, we used the same criteria as \citet{ma2019kernel} to optimally use GPU utilization. For $s,q$, we prefer larger $q$ because as it is explained in \citet{ma2018power}, larger $q$ allows for larger learning rate and better condition number. However, in our algorithm we need to approximate the top $q$ eigensystem of Nystr\"om sub-samples matrix. We used Scipy \citet{2020SciPy-NMeth} library to approximate these eigensystem. The stability and precision of these approximations depends on how large is the ratio of $\frac{s}{q}$. Empirically we need this ratio to be larger than 10. On the other hand increasing $s$ will increase setup cost, computation cost and  memory cost. We take steps below to choose $q$ and $s$,
\begin{enumerate}
    \item We first choose $s$ as big as our GPU memory allow
    \item We choose $q \approx \frac{s}{10}$
    \item We set batch size and learning rate automatically using the \textit{new} top eigenvalue as it is explained in \citet{ma2019kernel} and \citet{ma2018power}.
\end{enumerate}


 
 

\section{Classical approach to learning kernel models with GD}\label{sec:classical}
\begin{figure*}[t]
        \centering 
        \begin{subfigure}{1\textwidth}
        \includegraphics[width=1.0\linewidth]{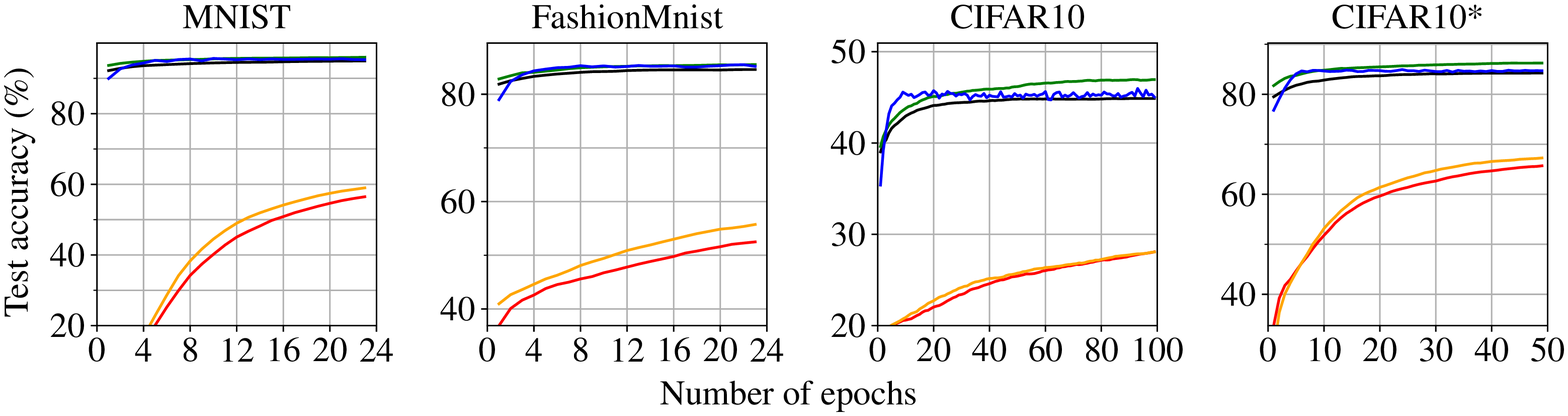}%
        \end{subfigure}
        
        \begin{subfigure}{1\textwidth}
        \centering 
        \hspace{0.9cm}
        \includegraphics[width=0.9\linewidth]{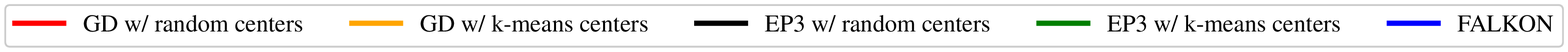}%
        \end{subfigure}
        
        \caption{\textbf{(Large scale training.)} 
      This figure shows the slow convergence of gradient descent given in \eqref{eq:gradient_descent} compared to our algorithm and \FALKON~from \citet{rudi2017falkon}.
      Note that \FALKON~involves a matrix inverse for a projection operation and hence converges faster with respect to the number of epochs.
}

    \label{fig:baseline}
\end{figure*}
If you plug in the form of general kernel models into \eqref{eq:kernel_regression}, we get
\begin{align}
    \minimize{\alphavec} L(\alphavec)&=\sum_{i=1}^n L(\sum_{j=1}^p K(\x_i,\z_j)\alpha_j,y_i) + \lambda \inner{\sum_{j=1}^p K(\cdot,\z_j),\sum_{j=1}^p K(\cdot,\z_j)}_\Hilbert\\
    &=\sum_{i=1}^n L( K(X_i,Z)\alphavec,y_i)+\lambda\alphavec\tran K(Z,Z)\alphavec.
\end{align}
For the square loss this is equivalent to
\begin{align}
    \minimize{\alphavec}\norm{K(X,Z)\alphavec-\y}^2+\lambda\alphavec\tran K(Z,Z)\alphavec.
\end{align}
Gradient descent on this problem for the square loss yields the update equation,
\begin{align}\label{eq:gradient_descent}
    \alphavec^{t+1}=\alphavec^t -\eta K(Z,X)((K(X,Z)\alphavec^t-\y)-\eta\lambda K(Z,Z)\alphavec.
\end{align}

\end{document}